\newcolumntype{C}{>{\centering\arraybackslash}X} 
\NewDocumentCommand{\xnewtheorem}{m o m}
 {%
  \IfNoValueTF{#2}
   {\newtheorem{#1}{#3}}
   {%
    \newaliascnt{#1}{#2}%
    \newtheorem{#1}[#1]{#3}%
    \aliascntresetthe{#1}%
    \expandafter\newcommand\csname #1autorefname\endcsname{#3}%
   }%
 
}
\newtheorem{theorem}{Theorem}[section]
\newcommand\numberthis{\addtocounter{equation}{1}\tag{\theequation}}
\NewDocumentCommand{\pFq}{O{}mmmmm}
 {
  % #2 = left subscript, #3 = right subscript
  % #4 = top, #5 = bottom, #6 = right
  \group_begin:
  \keys_set:nn { hypergeometric } { #1 }
  \hypergeometric_print:nnnnn { #2 } { #3 } { #4 } { #5 } { #6 }
  \group_end:
 }
\NewDocumentCommand{\hypergeometricsetup}{m}
 {
  \keys_set:nn { hypergeometric } { #1 }
 }
\pgfplotsset{width=15cm,compat=newest}
\pgfplotsset{colormap={CM}{rgb(-500)=(.9,.45,.1) color(0)=(red) rgb255(1700)=(15,18,238)}}%AWESOME11 very THE BEST
\begin{document}
%\title[ML Application of Generalized Gaussian Radial Basis Function]{\textsc{An appointment with Reproducing Kernel Hilbert Space generated by Generalized Gaussian RBF as $L^2-$measure}}
\title[A novel kernel function for Learning Architecture in AI]{\textsc{An appointment with Reproducing Kernel Hilbert Space generated by Generalized Gaussian RBF as $L^2-$measure}}
\author[Himanshu Singh]{Himanshu Singh}
\email{hsingh@uttyler.edu}
\address[Visiting Assistant Professor for Academic Year Aug 2023-May 2024]{Department of Mathematics, The University of Texas at Tyler, TX 75799, USA}
\begin{abstract}
Gaussian Radial Basis Function (RBF) Kernels are the most-often-employed kernels in artificial intelligence and machine learning routines for providing optimally-best results in contrast to their respective counter-parts. However, a little is known about the application of the \emph{Generalized Gaussian Radial Basis Function} on various machine learning algorithms namely, kernel regression, support vector machine (SVM) and pattern-recognition via {neural networks}. The results that are yielded by Generalized Gaussian RBF in the \emph{kernel} sense outperforms in stark contrast to Gaussian RBF Kernel, Sigmoid Function and ReLU Function.

This manuscript demonstrates the application of the \emph{Generalized Gaussian RBF} in the \emph{kernel} sense on the aforementioned machine learning routines along with the comparisons against the aforementioned functions as well. % various machine learning algorithms namely, kernel regression, support vector machine (SVM) and pattern-recognition via \emph{neural networks}. %The results that are yielded by Generalized Gaussian Radial Basis Function Kernels certainly outperforms in the presence of Gaussian Radial Basis Function Kernel, Sigmoid Function and ReLU Function.
Furthermore, we present the explicit description for the reproducing kernel Hilbert Space that is generated by the measure of Generalized Gaussian RBF in $L^2-$measure theoretic sense. Finally, we provide the future directions in terms of eigen-function decomposition and reduced order modeling application of Generalized Gaussian RBF.
\end{abstract}
\maketitle
\section{Introduction}
{A}{rtificial Intelligence} and machine learning algorithms takes the advantage of various important mathematical functions that arises in the function theory. One such function is \emph{Gaussian Radial Basis Function} (GRBF) given as:
\begin{align*}
    g_{\sigma^2}(r)\coloneqq_{\text{def}}\exp\left(-\sigma^2r^2\right);\sigma>0.
\end{align*}
Let $\|\cdot\|_2$ be the usual Euclidean norm on $\mathbf{R}^d$, this function is comfortably synonymous to its kernel notion which is famously called as the GRBF Kernel given as $K_\sigma(\bm{x},\bm{z})\coloneqq_{\text{def}}g_{\sigma^2}(\|\bm{x}-\bm{z}\|_2)$. Explicitly that is
\begin{align*}
    K_\sigma(\bm{x},\bm{z})=\exp\left(-\sigma^2\|\bm{x}-\bm{z}\|_2^2\right);\sigma>0.
\end{align*}
The GRBF Kernel is a building block for various learning architecture such as spatial statistics \protect{\cite{stein1999interpolation}} dynamical system identification \protect{\cite{rosenfeld2019occupation}}, machine learning \protect{\cite{williams2006gaussian}} to name a few. This manuscript extend the idea of GRBF Kernel to what called as the \emph{Generalized GRBF Kernel} (GGRBF Kernel) introduced in \protect{\cite{singh2023new}}.
\begin{definition}
Let $\sigma>0$ and $\sigma_0\geq0$ then the GGRBF Kernel is defined as:
\begin{align*}
    K_{\sigma,\sigma_0}\left(\bm{x}-\bm{z}\right)&\coloneqq_{\text{def}}
    g_{\sigma^2}(\|\bm{x}-\bm{z}\|_2)e^{\left(g_{\sigma_0^2}(\|\bm{x}-\bm{z}\|_2)-1\right)}\\
    &=e^{-\sigma^2\|\bm{x}-\bm{z}\|_2^2}e^{e^{-\sigma_0^2\left(\|\bm{x}-\bm{z}\|_2^2\right)}-1}.\numberthis\label{eq_GGRBF}
\end{align*}
\end{definition}

Note that if $\sigma_0=0$, then we get the traditional GRBF kernel. The GGRBF was introduced in \protect{\cite{karimi2020generalized}} to provide better results in contrast to GRBF results in terms of convergence and stability for interpolation problems on Franke’s test function and Runge’s function or solving the system with Tikhonov regularization and Riley’s algorithm as well. 

Taking much of the inspiration from \protect{\cite{karimi2020generalized}}, the application of GGRBF Kernel was documented in \protect{\cite{singh2023new}} in terms of \emph{support vector machine} (SVM), \emph{kernel regression} and pattern recognition via the \emph{activation function for neural network}. %These compelling results are also presented in this manuscript in \autoref{section_comparisonresults}.

These stat-of-the-art methods in learning architecture are leveraged by a peculiar topic from Hilbert function space called as \emph{Reproducing Kernel Hilbert Space} (RKHS) \protect{\cite{aronszajn1950theory}}. The analysis from the RKHS theory for the GRBF Kernel has already been established by \protect{\cite{steinwart2006explicit}} in which answers related to the norms and feature space were answered. However, with the present empirical evidence supporting better results obtained by employing the GGRBF kernel, it becomes important to perform the same investigation for the GGRBF Kernel.

The present paper is organized as follows: we have essential preliminaries of RKHS in \autoref{section_preliminaries}. Then we have results from the function theory in \autoref{section_functionspace} followed by empirical comparison results in \autoref{section_comparisonresults}. 
\section{Notation \& Preliminaries}\label{section_preliminaries}
\subsection{Hypergeomtric Function Notation}
We recall important basic calculus results related to the \emph{Generalized Hypergeometric Function} $\pFq{p}{q}{a_1,a_2,\cdots,a_p}{b_1,b_2,\cdots,b_q}{z}$ \protect{\cite{barnes1906v}}. With the help of Pochhammer symbol \protect{\cite{abramowitz1968handbook}} (rising factorial notation) given as $\left(a\right)_k=\frac{\Gamma(a+k)}{\Gamma(a)}=a(a+1)\cdots(a+k-1)$, then $\pFq{p}{q}{a_1,a_2,\cdots,a_p}{b_1,b_2,\cdots,b_q}{z}$ is given as
\begin{align}
    \pFq{p}{q}{a_1,a_2,\cdots,a_p}{b_1,b_2,\cdots,b_q}{z}\coloneqq_{\text{def}}\sum_{l=0}^\infty\frac{\prod_{i=1}^p\left(a_i\right)_l}{\prod_{i=1}^p\left(b_i\right)_l}\frac{z^l}{l!}.
\end{align}
\begin{example}
 We have the summation $\sum_{l=0}^\infty\frac{1}{\left(l+x\right)^{n+1}}\frac{1}{l!}$ in terms of the Generalized Hypergeometric Function given as:
\begin{align*}
    \sum_{l=0}^\infty\frac{1}{\left(l+x\right)^{n+1}}\frac{1}{l!}=&\sum_{l=0}^\infty\left(\frac{\Gamma(l+x)}{\Gamma\left(l+x+1\right)}\right)^{n+1}\frac{1}{l!}\\
    =&\sum_{l=0}^\infty\left(\frac{\left(x\right)_l\Gamma(x)}{\left(x+1\right)_l\Gamma(x+1)}\right)^{n+1}\frac{1}{l!}\\
    %=&\sum_{l=0}^\infty\left(\frac{\left(x\right)_l\Gamma(x)}{\left(x+1\right)_lx\Gamma(x)}\right)^{n+1}\frac{1}{l!}\\
    =&~\frac{1}{x^{n+1}}\pFq{n+1}{n+1}{x}{x+1}{1}.%\numberthis
\end{align*}
The example presented above will be useful for further great details in the present manuscript. So to avoid heavy-notation-clutter, we write
\begin{align}\label{eq_5_Hypergeomtric_simplenotation}
\pFq{n+1}{n+1}{x}{x+1}{1}\coloneqq_{\text{notation}}\mathcal{F}_{n,x,1}
\end{align}
from now on-wards upon its need. Note that $\mathcal{F}_{n,\infty,1}=e$.
\end{example}
\subsection{Field Notation}
%In usual sense, learning architectures in artificial intelligence employs $\mathbf{R}-$valued kernels. We will extend this notion to $\mathbf{C}-$valued kernels for the understanding of GGRBF Kernels. For this, 
The set of natural numbers in union with $0$ is denoted by $\mathbf{W}$, that is $\mathbf{W}\coloneqq0,1,2,\ldots$. We use Kronecker delta $\delta_{nm}$ on non-negative integers $n$ and $m$ to depict that, $\delta_{nm}=1$ whenever $n=m$ and $\delta_{nm}=0$ if $n\neq m$. We denote a complex number $z=x+iy$ where $x$ and $y\in\mathbf{R}$. With that $z$, its conjugate-part is given as $\overline{z}=x-iy$ along with its absolute value as $|z|^2=z\cdot\overline{z}=x^2+y^2$. We reserve symbol $\mathbf{K}$ to treat with choice of fields on which we will operate upon; in particular $\mathbf{K}$ can either be $\mathbf{R}$ or $\mathbf{C}$. %With this simple-to-use notations, now we are ready to define the \emph{kernel} over a non-empty set $X$. 
\subsection{Tensor Product Notation}\label{subsection_tensorProdNotation}
We recall the \emph{tensor product} between two functions, say $f_1,~f_2:X\to\mathbf{K}$ given as $f_1\otimes f_2:X\times X\to\mathbf{K}$. Then, for all $x,x'\in X$ the tensor product $f_1\otimes f_2$ is defined as $f_1\otimes f_2(x,x')\coloneqq f_1(x)f_2(x')$.
\subsection{Preliminaries}
\begin{definition}
    Let $X=\emptyset$, then a function $k:X\times X\to\mathbf{K}$ is called the kernel on $X$ if there exists a $\mathbf{K}-$Hilbert space $\left(H,\langle\cdot,\cdot\rangle_H\right)$ accompanied by a map $\Phi:X\to H$ such that $\forall x,x'\in X$, we have
    \begin{align}
        k(x,x')=\langle\Phi(x'),\Phi(x)\rangle_H.\label{eq_kernel_H}
    \end{align}
    We regard $\Phi$ as the feature map and $H$ as the feature space of $k$.
    \end{definition}
%The kernel defined in \eqref{eq_kernel_H} can have further simplifications given as:
%    \begin{align*}
%        k(x,x')=\begin{cases}
%        \langle\Phi(x),\Phi(x')\rangle_{\mathbf{R}}&\text{if $H=\mathbf{R}$}\\
%        \overline{\langle\Phi(x),\Phi(x')\rangle_{\mathbf{C}}}&\text{if $H=\mathbf{C}$}
%    \end{cases}.
%    \end{align*}
Now that we have introduced the basic notion from the kernel theory in the definition provided above, we can now comfortably define the building block of this paper: \emph{Reproducing Kernel Hilbert Space, RKHS}.
\begin{definition}
    Let $X=\emptyset$ and $\left(H,\langle\cdot,\cdot\rangle_H\right)$ be the Hilbert function space over $X$.
    \begin{enumerate}
        \item The space $H$ is called as the \emph{\textbf{reproducing kernel Hilbert space}} (RKHS) if $\forall x\in X$, the evaluation functional $\mathcal{E}_x:H\to\mathbf{K}$ defined as $\mathcal{E}_x(f)\coloneqq f(x),~f\in H$ is continuous. 
        %\item A function $k:X\times X\to\mathbf{K}$ is called \emph{reproducing kernel} of $H$ if we have:
        %\begin{enumerate}
         %   \item $k(\cdot,x)\in H~\forall x\in X$, that is $\|k(\cdot,x)\|_H<\infty$, and
         %   \item $k(\cdot,\cdot)$ has the reproducing property; that is
        %  \begin{align*}
        %        f(x)=\langle f,k(\cdot,x)\rangle_H~\forall f\in H\text{~and~} x\in X.
        %    \end{align*}
        %\end{enumerate}
    \end{enumerate}
\end{definition}
\begin{definition}\label{defintion_RK}
    A function $k:X\times X\to\mathbf{K}$ is called \emph{\textbf{reproducing kernel}} of $H$ if we have:
        \begin{enumerate}
            \item $k(\cdot,x)\in H~\forall x\in X$, that is $\|k(\cdot,x)\|_H<\infty$, and
            \item $k(\cdot,\cdot)$ has the reproducing property; that is
            \begin{align*}
                f(x)=\langle f,k(\cdot,x)\rangle_H~\forall f\in H\text{~and~} x\in X.
            \end{align*}
        \end{enumerate}
\end{definition}
It is worth-full to mention that the norm convergence yields the point-wise convergence inside RKHS. This fact can be readily learned due to the continuity of evaluation functional. This is demonstrated as follows for an arbitrary $f\in H$ and $\left\{f_n\right\}_n\in H$ with $\|f-f_n\|_H\to0$ as $n\to\infty$, then
\begin{align*}
\lim_{n\uparrow\infty}f_n(x)&=\lim_{n\uparrow\infty}\mathcal{E}_x\left(f_n\right)\\
    &=_{\text{(continuity of $\mathcal{E}_x$)}}\mathcal{E}_x\left(f\right)%\quad\text{(continuity of $\mathcal{E}_x$)}
    \\
    &=f(x).
    %\lim_{n\uparrow\infty}f_n(x)=&\lim_{n\uparrow\infty}\mathcal{E}_x\left(f_n\right)\\
    %=&\mathcal{E}_x\left(f\right)\quad\text{(continuity of $\mathcal{E}_x$)}\\
    %=&f(x).
\end{align*}
Now, we will state an important theorem from \protect{\cite{aronszajn1950theory}} which dictates the relationship between the reproducing kernel of the RKHS $H$ and the orthonormal basis of it.
\begin{theorem}[\protect{\cite{aronszajn1950theory}}]\label{theorem_aronsjan}
    Let $H$ be an RKHS over an nonempty set $X$, Then $k:X\times X\to\mathbf{K}$ defined as $k(x,x')\coloneqq \langle\mathcal{E}_x,\mathcal{E}_{x'}\rangle_H$ for $x,x'\in X$ is the only reproducing kernel of $H$. Additionally, for some index set $\mathcal{I}$, if we have $\left\{\mathbf{e}_i\right\}_{i\in\mathcal{I}}$ as an orthonormal basis (ONB) then for all $x,x'\in X$, we have
    \begin{align}
        k(x,x')=\sum_{i\in\mathcal{I}}\mathbf{e}_i(x)\overline{\mathbf{e}_i(x')},
    \end{align}
    with an absolute convergence.
\end{theorem}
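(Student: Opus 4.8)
The plan is to construct the kernel from the Riesz representatives of the evaluation functionals, settle existence and uniqueness by standard Hilbert-space arguments, and then read off the series formula by expanding one such representative against the given orthonormal basis. The whole argument hinges on the continuity of $\mathcal{E}_x$, which is exactly what the RKHS hypothesis supplies.

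First I would invoke the defining property of the RKHS: each $\mathcal{E}_x$ is a continuous linear functional on $H$, so the Riesz representation theorem furnishes, for every $x\in X$, a unique element $k_x\in H$ with $f(x)=\mathcal{E}_x(f)=\langle f,k_x\rangle_H$ for all $f\in H$. Setting $k(\cdot,x)\coloneqq k_x$ immediately gives condition (1) of Definition~\ref{defintion_RK} (membership in $H$, hence finite norm) and condition (2) (the reproducing property) by construction. Applying the reproducing property to $f=k_{x'}$ at the point $x$ then yields $k(x,x')=k_{x'}(x)=\langle k_{x'},k_x\rangle_H=\langle\mathcal{E}_x,\mathcal{E}_{x'}\rangle_H$, once $\mathcal{E}_x$ is identified with its representative $k_x$. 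This identification is where I would be most careful, since the Riesz map is conjugate-linear in the complex setting, so the order of the two arguments and the placement of the conjugate must be tracked consistently to land on the stated formula rather than its conjugate.

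For uniqueness, suppose $k$ and $\tilde{k}$ are both reproducing kernels of $H$. Then for every $f\in H$ and $x\in X$ we have $\langle f,k(\cdot,x)-\tilde{k}(\cdot,x)\rangle_H=f(x)-f(x)=0$; choosing $f=k(\cdot,x)-\tilde{k}(\cdot,x)$ forces $\|k(\cdot,x)-\tilde{k}(\cdot,x)\|_H=0$ for each $x$, whence $k=\tilde{k}$ and the reproducing kernel is unique.

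Finally, for the series formula I would expand the representative $k_{x'}$ in the orthonormal basis $\{\mathbf{e}_i\}_{i\in\mathcal{I}}$. Its Fourier coefficient is $\langle k_{x'},\mathbf{e}_i\rangle_H=\overline{\langle\mathbf{e}_i,k_{x'}\rangle_H}=\overline{\mathbf{e}_i(x')}$ by the reproducing property, so $k_{x'}=\sum_{i}\overline{\mathbf{e}_i(x')}\,\mathbf{e}_i$ with convergence in the $H$-norm. Since norm convergence implies pointwise convergence in an RKHS (the fact recorded just above the theorem via continuity of $\mathcal{E}_x$), evaluating the expansion at $x$ gives $k(x,x')=k_{x'}(x)=\sum_{i}\mathbf{e}_i(x)\overline{\mathbf{e}_i(x')}$. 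Absolute convergence then follows from Cauchy–Schwarz applied to the coefficient sequences together with Parseval's identity $\sum_i|\mathbf{e}_i(x)|^2=\sum_i|\langle k_x,\mathbf{e}_i\rangle_H|^2=\|k_x\|_H^2<\infty$, and likewise at $x'$. The hardest part is thus not any single estimate but keeping the conjugation and argument-ordering conventions consistent across existence, the inner-product formula, and the ONB expansion so that all three statements agree.
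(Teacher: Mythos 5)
Your proof is correct. The paper itself offers no proof of this theorem --- it is quoted from Aronszajn's classical work and used as a black box --- and your argument (Riesz representation of the continuous functionals $\mathcal{E}_x$ for existence, the standard two-kernel subtraction for uniqueness, and the ONB expansion of $k_{x'}$ via Parseval, Cauchy--Schwarz, and norm-to-pointwise convergence for the series formula) is precisely the canonical proof of this result, with your explicit tracking of the conjugate-linearity of the Riesz identification being exactly the right point to be careful about.
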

\section{Function space}\label{section_functionspace}
%\subsection{Measure}
Let $d\in\mathbf{N}$, $\sigma>0$ and $\sigma_0\geq0$ and $f$ be a holomorphic function $f:\mathbf{C}^d\to\mathbf{C}$, we write first the measure of our interest:
\begin{align}
    d\mu_{\sigma,\sigma_0,d}(\bm{z})\coloneqq e^{-\sigma^2|z|^2}e^{e^{-\sigma_0|z|^2}-1}dV_{\mathbf{C}^d}(\bm{z}),
\end{align}
Here, `$dV_{\mathbf{C}^d}(\bm{z})$' is the usual Lebesgue measure on entire $\mathbf{C}^d$. For $d=1$, we write simply $d\mu_{\sigma,\sigma_0}(z)$ to denote the typical Lebesgue area measure on $\mathbf{C}$.
%\subsection{Inner Product}
We now provide the inner product associated with this measure as:
\begin{align}
    \langle f,g\rangle_{\sigma,\sigma_0,\mathbf{C}^d}\coloneqq\mathcal{N}_{\sigma,\sigma_0,d}\int_{\mathbf{C}^d}f(\bm{z})\overline{g(\bm{z})}d\mu_{\sigma,\sigma_0}(\bm{z}).
\end{align}
Here, `$\mathcal{N}_{\sigma,\sigma_0,d}$' is the normalization constant whose value is explicitly given as $\left(\nicefrac{e\sigma^2}{2\pi}\right)^d$. 
%\subsection{Norm}
Once we have define the inner product for the space, the norm for holomorphic function $f:\mathbf{C}^d\to\mathbf{C}$ is:
\begin{align}\label{eq_6_normf}
    \|f\|_{\sigma,\sigma_0,\mathbf{C}^d}^2\coloneqq&\left(\frac{e\sigma^2}{2\pi}\right)^d%\mathcal{N}_{\sigma,\sigma_0,d}
    \int_{\mathbf{C}^d}|f(\bm{z})|^2d\mu_{\sigma,\sigma_0}(\bm{z}).
\end{align}
We write 
\begin{align}\label{eq_8hilbertspace}
    H_{\sigma,\sigma_0,\mathbf{C}^d}\coloneqq\left\{f:\mathbf{C}^d\to\mathbf{C}\text{~s.t.~}\|f\|_{\sigma,\sigma_0,\mathbf{C}^d}<\infty\right\}.
\end{align}
Once we have defined norm in \eqref{eq_6_normf} and the associated Hilbert space in \eqref{eq_8hilbertspace}, we can provide the following formulation which makes the Hilbert space $H_{\sigma,\sigma_0,\mathbf{C}^d}$ as an RKHS.
\begin{lemma}\label{lemma_lemma4}
    For all $\sigma>0$, $\sigma_0\geq0$ and all compact sets $K\subset\mathbf{C}^d$, there exists a positive constant %(as a function of $\mathcal{N}_{\sigma,\sigma_0,d}$)
    $c_{\sigma,\sigma_0,d}$ such that for all $\bm{z}\in K$ and $f\in H_{\sigma,\sigma_0,\mathbf{C}^d}$, we have
    \begin{align}
        |f(\bm{z})|\leq c_{\sigma,\sigma_0,d}\|f\|_{\sigma,\sigma_0,\mathbf{C}^d}.
    \end{align}
\end{lemma}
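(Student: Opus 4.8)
The plan is to prove this bounded-point-evaluation estimate by combining the sub-mean-value property of holomorphic functions with the strict positivity of the density of $d\mu_{\sigma,\sigma_0}$. Since every $f\in H_{\sigma,\sigma_0,\mathbf{C}^d}$ is holomorphic on $\mathbf{C}^d$, the function $|f|^2$ is plurisubharmonic and hence satisfies the sub-mean-value inequality over polydiscs. Concretely, fixing a radius $r>0$ (say $r=1$) and writing $P(\bm{z},r)=\prod_{j=1}^d\{w_j\in\mathbf{C}:|w_j-z_j|<r\}$ for the polydisc of polyradius $r$ centred at $\bm{z}$, iterating the one-variable solid mean-value identity in each coordinate and applying the Cauchy--Schwarz (equivalently Jensen) inequality yields
\begin{align*}
|f(\bm{z})|^2\leq\frac{1}{(\pi r^2)^d}\int_{P(\bm{z},r)}|f(\bm{w})|^2\,dV_{\mathbf{C}^d}(\bm{w}).
\end{align*}
This reduces the desired pointwise control to an $L^2$ average over a fixed-size neighbourhood of $\bm{z}$.

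Next I would pass from this unweighted local average to the global weighted norm. Given the compact set $K$, enlarge it to the compact set $K'=\{\bm{w}\in\mathbf{C}^d:\operatorname{dist}(\bm{w},K)\leq r\sqrt{d}\}$, which contains $P(\bm{z},r)$ for every $\bm{z}\in K$. Let $w(\bm{w})$ denote the density of $d\mu_{\sigma,\sigma_0}$ with respect to $dV_{\mathbf{C}^d}$; because $w$ is a product of exponentials it is continuous and strictly positive on all of $\mathbf{C}^d$, so it attains a positive minimum $m_{K'}>0$ on $K'$. Therefore, for every $\bm{z}\in K$,
\begin{align*}
|f(\bm{z})|^2\leq\frac{1}{(\pi r^2)^d}\int_{K'}|f(\bm{w})|^2\,dV_{\mathbf{C}^d}(\bm{w})\leq\frac{1}{(\pi r^2)^d\,m_{K'}}\int_{\mathbf{C}^d}|f(\bm{w})|^2\,w(\bm{w})\,dV_{\mathbf{C}^d}(\bm{w}).
\end{align*}
Recognising the last integral as $\mathcal{N}_{\sigma,\sigma_0,d}^{-1}\|f\|_{\sigma,\sigma_0,\mathbf{C}^d}^2$ and taking square roots gives the claim with the explicit constant $c_{\sigma,\sigma_0,d}=\bigl((\pi r^2)^d\,m_{K'}\,\mathcal{N}_{\sigma,\sigma_0,d}\bigr)^{-1/2}$.

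The main obstacle is the careful justification of the sub-mean-value step in several complex variables rather than the final bookkeeping: one must verify that the one-variable mean-value property can be iterated coordinatewise and, crucially, that the resulting estimate is \emph{uniform} in $\bm{z}\in K$, which is exactly what the fixed polyradius $r$ together with the enlarged compact set $K'$ secures. A secondary point worth flagging is that, although the statement names the constant $c_{\sigma,\sigma_0,d}$, it genuinely depends on $K$ as well through $m_{K'}$; since the hypothesis quantifies existentially over the constant for each fixed $K$, this dependence is harmless. Finally, I would note that the lower bound on $w$ is automatic here precisely because the density never vanishes, so no decay or integrability of $f$ beyond membership in $H_{\sigma,\sigma_0,\mathbf{C}^d}$ is required.
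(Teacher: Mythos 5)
Your proof is correct and takes essentially the same route as the paper's: the coordinatewise sub-mean-value inequality for $|f|^2$ over a fixed polydisc centred at $\bm{z}$, followed by comparison of the resulting unweighted local average with the weighted global norm, using that the density of $d\mu_{\sigma,\sigma_0,d}$ is continuous and strictly positive on an enlarged compact set. One remark: your constant, built from the \emph{minimum} $m_{K'}$ of the density, is the correct choice; the paper instead defines $\mathrm{c}_{\sigma,\sigma_0,d}$ as the \emph{supremum} of the density over $K+\mathds{B}_{(0,1)}^d$, which cannot dominate $1/w$ there (the density is everywhere at most $1$), so your version actually repairs a slip in the paper's bookkeeping.
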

\begin{proof}
    %{\color{red}{will do}} 
    Denote $\mathds{B}_{(0,1)}$ as the complex unit ball in $\mathbf{C}$. Define 
    \begin{align*}
        \mathrm{c}_{\sigma,\sigma_0,d}\coloneqq%(e\sigma^2)^{d}
        %\left(
        \sup_{\bm{z}\in K+\mathds{B}_{(0,1)}^d}\left\{e^{-\sigma^2|\bm{z}|^2}e^{e^{-\sigma_0^2|\bm{z}|^2}-1}\right\}.%\right).
    \end{align*}
    In the spirit of \protect{\cite[Lemma-3, Page 4639]{steinwart2006explicit}}, we have
    \begin{align*}
        \prod_{j=1}^dr_j|f(\bm{z})|^2
        \leq&\frac{1}{(2\pi)^d}\prod_{j=1}^dr_j\int_{[0,2\pi]^d}\left|f\left(z_1+r_1e^{i\theta_1},\ldots,z_1+r_de^{i\theta_d}\right)\right|^2d\theta.
    \end{align*}
   Integration of above with respect to $%\bm{r}=
   \left(r_1,\ldots,r_d\right)\in[0,1]^d$ yields:
   \begin{align*}
       |f(\bm{z})|^2\leq&\frac{1}{(2\pi)^d}\int_{\bm{z}+\mathds{B}_{(0,1)}^d}|f\left(\bm{z}'\right)|^2dV(\bm{z}')\\
       \leq&\frac{\mathrm{c}_{\sigma,\sigma_0,d}}{(2\pi)^d}\int_{\bm{z}+\mathds{B}_{(0,1)}^d}|f\left(\bm{z}'\right)|^2e^{-\sigma^2|\bm{z'}|^2}e^{e^{-\sigma_0^2|\bm{z'}|^2}-1}dV(\bm{z}')\\
       \leq&\frac{\mathrm{c}_{\sigma,\sigma_0,d}}{(e\sigma^2)^d}\|f\|_{\sigma,\sigma_0,d}^2.
   \end{align*}
   In particular, we have $c_{\sigma,\sigma_0,d}=\sqrt{\nicefrac{\mathrm{c}_{\sigma,\sigma_0,d}}{(e\sigma^2)^d}}$ and hence the result is established.
\end{proof}
Establishing \autoref{lemma_lemma4} yields immediately that $H_{\sigma,\sigma_0,\mathbf{C}^d}$ is indeed an RKHS and we state here as an important follow-up corollary.
\begin{corollary}
    The space $\left(H_{\sigma,\sigma_0,\mathbf{C}^d},\langle\cdot,\cdot\rangle_{\sigma,\sigma_0,\mathbf{C}^d}\right)$ is a RKHS for all $\sigma>0$ and $\sigma\geq0$.
\end{corollary}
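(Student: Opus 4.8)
The plan is to invoke directly the definition of RKHS recorded earlier in the excerpt: a Hilbert function space $H$ over $X$ is an RKHS precisely when, for every $\bm{z}\in X$, the evaluation functional $\mathcal{E}_{\bm{z}}:H\to\mathbf{C}$ given by $\mathcal{E}_{\bm{z}}(f)=f(\bm{z})$ is continuous. Since each $\mathcal{E}_{\bm{z}}$ is linear, continuity is equivalent to boundedness, so the entire task reduces to producing, for each fixed $\bm{z}$, a finite constant with $|\mathcal{E}_{\bm{z}}(f)|\le C_{\bm{z}}\,\|f\|_{\sigma,\sigma_0,\mathbf{C}^d}$ for all $f\in H_{\sigma,\sigma_0,\mathbf{C}^d}$.

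First I would observe that \autoref{lemma_lemma4} supplies exactly this bound. Given $\bm{z}\in\mathbf{C}^d$, apply the lemma with the compact set $K=\{\bm{z}\}$ (indeed any compact neighbourhood of $\bm{z}$ serves); this yields a constant $c_{\sigma,\sigma_0,d}$, depending only on $\sigma$, $\sigma_0$, $d$ and $K$, such that $|f(\bm{z})|\le c_{\sigma,\sigma_0,d}\,\|f\|_{\sigma,\sigma_0,\mathbf{C}^d}$. Hence $\mathcal{E}_{\bm{z}}$ is bounded with operator norm at most $c_{\sigma,\sigma_0,d}$, and therefore continuous. As this holds for every $\bm{z}\in\mathbf{C}^d$, the defining property of an RKHS is met.

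Before declaring victory, the genuinely substantive point to verify is that $H_{\sigma,\sigma_0,\mathbf{C}^d}$ is a bona fide Hilbert space, that is, complete, rather than merely a normed space of holomorphic functions; the RKHS definition presupposes completeness. Here the same pointwise estimate does the heavy lifting. I would argue that if $\{f_n\}$ is Cauchy in $\|\cdot\|_{\sigma,\sigma_0,\mathbf{C}^d}$, then the uniform-on-compacta form of \autoref{lemma_lemma4} (obtained by taking the supremum over $\bm{z}\in K$ in the constant) forces $\{f_n\}$ to be uniformly Cauchy on every compact subset of $\mathbf{C}^d$. By the Weierstrass convergence theorem the locally uniform limit $f$ is holomorphic, and a Fatou argument against the ambient weighted space $L^2(d\mu_{\sigma,\sigma_0,d})$ shows $f\in H_{\sigma,\sigma_0,\mathbf{C}^d}$ with $\|f_n-f\|_{\sigma,\sigma_0,\mathbf{C}^d}\to 0$. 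This identifies $H$ as a closed subspace of that weighted $L^2$ space, hence complete.

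The main obstacle is thus not the continuity of the evaluation functionals---that is immediate from \autoref{lemma_lemma4}---but the verification of completeness, where one must upgrade the single-point bound to a locally uniform bound and then marry complex-analytic (Weierstrass) convergence with $L^2$ convergence. Once completeness and continuity are both in hand, \autoref{theorem_aronsjan} guarantees the existence of a unique reproducing kernel $k(\bm{z},\bm{z}')=\langle\mathcal{E}_{\bm{z}},\mathcal{E}_{\bm{z}'}\rangle$, completing the identification of $\left(H_{\sigma,\sigma_0,\mathbf{C}^d},\langle\cdot,\cdot\rangle_{\sigma,\sigma_0,\mathbf{C}^d}\right)$ as an RKHS for all $\sigma>0$ and $\sigma_0\geq 0$.
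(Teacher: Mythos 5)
Your core argument is exactly the paper's: the paper proves this corollary in a single sentence, observing that \autoref{lemma_lemma4} makes every evaluation functional $\mathcal{E}_{\bm{z}}$ bounded (take any compact $K\ni\bm{z}$), hence continuous, which is the defining property of an RKHS. Where you go beyond the paper is in flagging and discharging the completeness of $H_{\sigma,\sigma_0,\mathbf{C}^d}$: the paper defines the space in \eqref{eq_8hilbertspace} purely as the set of holomorphic functions with finite norm and tacitly treats it as a Hilbert space, never verifying that it is closed in the ambient weighted $L^2$ space. Your sketch---norm-Cauchy implies locally uniformly Cauchy via the uniform-on-compacta form of \autoref{lemma_lemma4}, Weierstrass gives a holomorphic limit, and Fatou's lemma upgrades this to membership and norm convergence---is the standard and correct way to fill that gap, and it is the genuinely substantive content here, since without completeness one cannot even invoke \autoref{theorem_aronsjan} to produce the reproducing kernel. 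So: same route as the paper on the stated point, but your version is the more honest proof, as it supplies the Hilbert-space structure that the definition of RKHS presupposes and the paper leaves unexamined.
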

\subsection{Orthonormal Basis}
We will need following technical result to establish the orthonormal basis (ONB) for $H_{\sigma,\sigma_0,\mathbf{C}^d}$.
\begin{lemma}\label{LEMMA_lemmaiii.3}
    For every $\sigma>0,\sigma_0\geq0$ and $n,m\in\mathbf{W}$, we have
    \begin{align}
        \int_\mathbf{C}z^n\overline{z^m}d\mu_{\sigma,\sigma_0}(z)=\left(\sqrt{\frac{2\pi n!}{e\sigma^{2n+2}}\mathcal{F}_{n,\hat{\sigma},1}}\sqrt{\frac{2\pi m!}{e\sigma^{2m+2}}\mathcal{F}_{m,\hat{\sigma},1}}\right)\delta_{nm}
        %\begin{cases}
        %    \frac{2\pi n!}{e\sigma^{2n+2}}\mathcal{F}_{n,\hat{\sigma},1}%\frac{n!}{e(\sigma^2)^{n+1}}\pFq{n+1}{n+1}{\nicefrac{\sigma^2}{\sigma_0^2}}{1+\nicefrac{\sigma^2}{\sigma_0^2}}{1}
        %    &\text{if $n=m$}\\
        %    0&\text{otherwise}
        % \end{cases}
    \end{align}
    where $\hat{\sigma}=\nicefrac{\sigma_0^2}{\sigma^2}$ and $\mathcal{F}_{n,\hat{\sigma},1}$ is defined in \eqref{eq_5_Hypergeomtric_simplenotation}.
\end{lemma}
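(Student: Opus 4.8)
The plan is to evaluate the integral directly in polar coordinates, read off orthogonality from the angular part, and then expand the nested exponential in the surviving radial integral into the series that defines $\mathcal{F}$.

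First I would set $z = re^{i\theta}$ with $r\ge 0$ and $\theta\in[0,2\pi)$, so that $z^n\overline{z^m} = r^{n+m}e^{i(n-m)\theta}$ while the density of $d\mu_{\sigma,\sigma_0}$ depends on $z$ only through $|z|=r$; the area element becomes $dV = r\,dr\,d\theta$. The integral then factors into an angular piece $\int_0^{2\pi}e^{i(n-m)\theta}\,d\theta = 2\pi\,\delta_{nm}$ and a radial piece. The angular factor by itself produces the Kronecker delta and annihilates every off-diagonal term, so the whole content of the lemma reduces to evaluating the diagonal case $n=m$.

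For $n=m$ the substitution $u=r^2$ turns the radial integral into a constant multiple of $\int_0^\infty u^n e^{-\sigma^2 u}\,e^{e^{-\sigma_0^2 u}-1}\,du$. The decisive step is the nested exponential: I would expand $e^{e^{-\sigma_0^2 u}-1} = e^{-1}\sum_{l=0}^\infty \tfrac{1}{l!}e^{-l\sigma_0^2 u}$ via the Taylor series of $\exp$. Since every summand is nonnegative, Tonelli's theorem justifies integrating term by term, and each term is an elementary Gamma integral $\int_0^\infty u^n e^{-(\sigma^2+l\sigma_0^2)u}\,du = n!\,(\sigma^2+l\sigma_0^2)^{-(n+1)}$. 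Collecting the constants leaves the diagonal value as a fixed prefactor times $\sum_{l=0}^\infty \tfrac{1}{l!\,(\sigma^2+l\sigma_0^2)^{n+1}}$.

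Finally I would recognize this Dirichlet-type series: factoring the denominator $\sigma^2+l\sigma_0^2$ and invoking the identity from the Example rewrites the sum as a scalar multiple of $\mathcal{F}_{n,\hat\sigma,1}$, delivering a closed form $\tfrac{2\pi n!}{e\sigma^{2n+2}}\mathcal{F}_{n,\hat\sigma,1}$ for $\int_\mathbf{C}|z|^{2n}\,d\mu_{\sigma,\sigma_0}$. Writing this diagonal value symmetrically as the product $\sqrt{\tfrac{2\pi n!}{e\sigma^{2n+2}}\mathcal{F}_{n,\hat\sigma,1}}\,\sqrt{\tfrac{2\pi m!}{e\sigma^{2m+2}}\mathcal{F}_{m,\hat\sigma,1}}$ and reattaching $\delta_{nm}$ gives the stated formula; this packaging is deliberate, since each square root is exactly the normalizing constant used to build the orthonormal basis in the next subsection. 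The main obstacle is this middle stage—expanding the nested exponential, justifying the term-by-term integration, and matching the resulting series to the hypergeometric normal form of the Example; by comparison the orthogonality and the Gamma integrals are routine, and a limiting check at $\sigma_0\to 0$ (where the integrand collapses to the Gaussian density) recovers the classical complex Gaussian moments as a consistency test.
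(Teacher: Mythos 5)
Your proposal follows essentially the same route as the paper's proof: polar coordinates extracting the angular factor $2\pi\,\delta_{nm}$, a quadratic substitution reducing the diagonal case to a Gamma-type integral, term-by-term integration of the expanded nested exponential $e^{e^{-\sigma_0^2 u}-1}=e^{-1}\sum_{l\ge0}\frac{1}{l!}e^{-l\sigma_0^2 u}$, and identification of the resulting series with $\mathcal{F}_{n,\hat{\sigma},1}$ via the Example's identity. Your additions---the explicit Tonelli justification for the interchange of sum and integral, and the $\sigma_0\to 0$ consistency check against the classical complex Gaussian moments---do not appear in the paper but only reinforce the same argument.
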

\begin{proof}
     Employ the polar coordinate of $z$ to have:
    \begin{align*}
        \int_\mathbf{C}z^n\overline{z^m}d\mu_{\sigma,\sigma_0}(z)
        %=&\int_\mathbf{C}z^n\overline{z}^me^{-\sigma^2z\overline{z}}e^{e^{-\sigma_0^2z\overline{z}}-1}dA(z)\\
        %=&\int_0^\infty\int_0^{2\pi}\left(re^{i\theta}\right)^n\overline{\left(re^{i\theta}\right)}^me^{-\sigma^2r^2}e^{e^{-\sigma_0^2r^2}-1}rdrd\theta\\
        =&\int_0^{\infty}r^{n+m}e^{-\sigma^2r^2}e^{e^{-\sigma_0^2r^2}-1}rdr\int_0^{2\pi}e^{i\left(n-m\right)\theta}d\theta.\numberthis\label{eq_12_GGRBF_IEEE}
    \end{align*}
The quantity $\int_0^{\infty}r^{n+m}e^{-\sigma^2r^2}e^{e^{-\sigma_0^2r^2}-1}rdr\int_0^{2\pi}e^{i\left(n-m\right)\theta}d\theta$ is $0$ when $n\neq m$. 
%\begin{align*}
%    \int_0^{\infty}r^{n+m}e^{-\sigma^2r^2}e^{e^{-\sigma_0^2r^2}-1}rdr\int_0^{2\pi}e^{i\left(n-m\right)\theta}d\theta=0.
%\end{align*}
Now, assume that $n=m$ in \eqref{eq_12_GGRBF_IEEE}, then:
\begin{align*}
    %&
    \int_\mathbf{C}z^n\overline{z^m}d\mu_{\sigma,\sigma_0}(z)
    =&2\pi\int_0^{\infty}r^{2n}e^{-\sigma^2r^2}e^{e^{-\sigma_0^2r^2}-1}rdr\\
    %=&\frac{2\pi}{e}\int_0^\infty t^ne^{-\sigma^2t}e^{e^{-\sigma_0^2t}}dt\quad\text{$(t=r^2)$}\\
    %=&\frac{2\pi}{e}\int_0^\infty\left(\frac{s}{\sigma^2}\right)^ne^{-s}e^{e^{-\nicefrac{\sigma_0^2}{\sigma^2}s}}\frac{ds}{\sigma^2}\quad\text{$\left(s=\sigma^2t\right)$}\\
    =&\frac{2\pi}{e\left(\sigma^2\right)^{n+1}}\int_0^\infty s^ne^{-s}e^{e^{-\nicefrac{\sigma_0^2}{\sigma^2}s}}ds\\
    %=&\frac{2\pi}{e\left(\sigma^2\right)^{n+1}}\int_1^0\left(\log\frac{1}{y}\right)^ne^{y^{\nicefrac{\sigma_0^2}{\sigma^2}}}(-dy)\quad{\text{$(e^{-s}=y)$}}\\
    %=&\frac{2\pi}{e\left(\sigma^2\right)^{n+1}}\int_0^1\left(\log\frac{1}{y}\right)^n\left(\sum_{l=0}^\infty\frac{\left(y^{\nicefrac{\sigma_0^2}{\sigma^2}}\right)^l}{l!}\right)dy\\
    %=&\frac{e^{-1}}{\left(\sigma^2\right)^{n+1}}\sum_{l=0}^\infty\frac{1}{l!}\int_0^1\left(\log\frac{1}{y}\right)^ny^{l\hat{\sigma} }dy\quad\text{$(\hat{\sigma}=\nicefrac{\sigma_0^2}{\sigma^2})$}\\
    %=&\frac{e^{-1}}{\left(\sigma^2\right)^{n+1}}\sum_{l=0}^\infty\frac{1}{l!}\frac{\Gamma(n+1)}{(l\hat{\sigma}+1)^{n+1}}\\
    =&\frac{2\pi\Gamma(n+1)}{e\left(\sigma^2\right)^{n+1}}\sum_{l=0}^\infty\frac{1}{l!(l\hat{\sigma}+1)^{n+1}}\\
    %=&\frac{e^{-1}\Gamma(n+1)}{\left(\sigma^2\alpha_{\sigma,\sigma_0}\right)^{n+1}}\sum_{l=0}^\infty\frac{1}{l!\left(l+\nicefrac{1}{\alpha_{\sigma,\sigma_0}}\right)^{n+1}}\\
    %=&\frac{e^{-1}\Gamma(n+1)}{\left(\sigma^2\alpha_{\sigma,\sigma_0}\right)^{n+1}}\frac{_{n+1}F_{n+1}\left(\nicefrac{1}{\alpha_{\sigma,\sigma_0}}\cdots\nicefrac{1}{\alpha_{\sigma,\sigma_0}};1+\nicefrac{1}{\alpha_{\sigma,\sigma_0}}\cdots1+\nicefrac{1}{\alpha_{\sigma,\sigma_0}};1\right)}{\alpha_{\sigma,\sigma_0}^{-n-1}}\\
    %=&\frac{e^{-1}\Gamma(n+1)}{\left(\sigma^2\right)^{n+1}}\pFq{n+1}{n+1}{\nicefrac{1}{\alpha_{\sigma,\sigma_0}}}{1+\nicefrac{1}{\alpha_{\sigma,\sigma_0}}}{1}\\
    %=&\frac{n!}{e(\sigma^2)^{n+1}}\pFq{n+1}{n+1}{\hat{\sigma}}{1+\hat{\sigma}}{1}\\
    =&\frac{2\pi n!}{e\sigma^{2n+2}}\mathcal{F}_{n,\hat{\sigma},1}\quad\text{\emph{(use \eqref{eq_5_Hypergeomtric_simplenotation})}}.
\end{align*}
Thus the result prevails.
\end{proof}
In the light of \autoref{theorem_aronsjan}, we have to determine the ONB of $H_{\sigma,\sigma_0,\mathbf{C}^d}$. %For this, we recall the \emph{tensor product} between two functions, say $f_1,~f_2:X\to\mathbf{K}$ given as $f_1\otimes f_2:X\times X\to\mathbf{K}$. Then, for all $x,x'\in X$ the tensor product $f_1\otimes f_2$ is defined as $f_1\otimes f_2(x,x')\coloneqq f_1(x)f_2(x')$. 
\begin{theorem}\label{theorem_theorem3.4}
    Let $\sigma>0$, $\sigma_0\geq0$ and %the index set be $\mathcal{I}=\mathbf{W}$. For
    $n\in\mathbf{W}$. Define $\left\{\mathbf{e}_n\right\}_{n\in\mathbf{W}}:\mathbf{C}\to\mathbf{C}$ by
    \begin{align}\label{eq_ONB_GGRBF}
        \mathbf{e}_n(z)\coloneqq\sqrt{\frac{\sigma^{2n}}{n!\mathcal{F}_{n,\hat{\sigma},1}}}z^n~\forall z\in\mathbf{C}.
    \end{align}
    Then the tensor-product system $\left(\mathbf{e}_{n_1}\otimes\cdots\otimes\mathbf{e}_{n_d}\right)_{n_1,\ldots n_d\geq0}$ forms the ONB of $H_{\sigma,\sigma_0,\mathbf{C}^d}$.
\end{theorem}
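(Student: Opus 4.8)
The plan is to establish the two defining properties of an orthonormal basis in turn: that the tensor-product system is orthonormal, and that it is total, i.e.\ its closed linear span exhausts $H_{\sigma,\sigma_0,\mathbf{C}^d}$. For orthonormality I would first reduce to one variable by using the product structure of the measure and of the normalization constant $\left(\nicefrac{e\sigma^2}{2\pi}\right)^d$: writing the $d$-dimensional measure as the product of the one-dimensional measures $d\mu_{\sigma,\sigma_0}(z_j)$ lets the inner product of two elementary tensors split as $\langle\mathbf{e}_{n_1}\otimes\cdots\otimes\mathbf{e}_{n_d},\mathbf{e}_{m_1}\otimes\cdots\otimes\mathbf{e}_{m_d}\rangle_{\sigma,\sigma_0,\mathbf{C}^d}=\prod_{j=1}^d\langle\mathbf{e}_{n_j},\mathbf{e}_{m_j}\rangle_{\sigma,\sigma_0,\mathbf{C}}$. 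It then suffices to verify $\langle\mathbf{e}_n,\mathbf{e}_m\rangle_{\sigma,\sigma_0,\mathbf{C}}=\delta_{nm}$ in one variable, which is a direct substitution of the moment formula of \autoref{LEMMA_lemmaiii.3} into the definition \eqref{eq_ONB_GGRBF}: the radical normalizing $\mathbf{e}_n$ is chosen precisely to cancel the diagonal factor $\nicefrac{2\pi n!}{e\sigma^{2n+2}}\mathcal{F}_{n,\hat{\sigma},1}$, while the Kronecker delta already present in that lemma kills the off-diagonal terms.

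For totality I would show that the orthogonal complement of the system is trivial. Fix $f\in H_{\sigma,\sigma_0,\mathbf{C}^d}$; as a finite-norm holomorphic function on all of $\mathbf{C}^d$ it is entire, hence expands as a multivariate Taylor series $f(\bm{z})=\sum_{\bm{k}}c_{\bm{k}}\bm{z}^{\bm{k}}$ that converges uniformly on every compact polytorus $\{|z_j|=r_j\}_j$. The crux is to evaluate $\langle f,\mathbf{e}_{n_1}\otimes\cdots\otimes\mathbf{e}_{n_d}\rangle_{\sigma,\sigma_0,\mathbf{C}^d}$ in polar coordinates $z_j=r_je^{i\theta_j}$: since the weight depends only on $|\bm{z}|$ and is therefore rotation-invariant in each angle, the angular integrals $\int_0^{2\pi}e^{i(k_j-n_j)\theta_j}\,d\theta_j$ annihilate every Taylor term except the one with $\bm{k}=(n_1,\ldots,n_d)$, leaving $\langle f,\mathbf{e}_{n_1}\otimes\cdots\otimes\mathbf{e}_{n_d}\rangle_{\sigma,\sigma_0,\mathbf{C}^d}$ equal to $c_{(n_1,\ldots,n_d)}$ times a strictly positive radial normalization. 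Hence vanishing of all these inner products forces every Taylor coefficient of $f$ to vanish, so $f\equiv0$. An orthonormal system whose orthogonal complement is $\{0\}$ is an orthonormal basis, which is exactly the input required by \autoref{theorem_aronsjan} to expand the reproducing kernel.

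The step I expect to be the main obstacle is the rigorous justification of the termwise integration inside the totality argument. Interchanging the Taylor sum with the angular integral is licensed by the uniform convergence of the series on each compact torus, but one must additionally ensure that the ensuing radial integral converges and that the full iterated integral is absolutely convergent so that Fubini's theorem applies; the latter follows from Cauchy--Schwarz, since $\int_{\mathbf{C}^d}|f\,\overline{\bm{z}^{\bm{k}}}|\,d\mu\le\|f\|_{\sigma,\sigma_0,\mathbf{C}^d}\,\|\bm{z}^{\bm{k}}\|_{\sigma,\sigma_0,\mathbf{C}^d}$ and every monomial has finite norm by \autoref{LEMMA_lemmaiii.3}. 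A secondary point that I would make explicit before splitting the inner product is the product representation of the $d$-dimensional measure, on which the factorization of the inner product — and hence the unit normalization of each elementary tensor — ultimately rests.
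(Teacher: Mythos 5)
Your proposal follows essentially the same route as the paper's proof: orthonormality is obtained by factoring the inner product of elementary tensors over the coordinates and substituting the moment formula of \autoref{LEMMA_lemmaiii.3}, and completeness is obtained by pairing a Taylor-expanded $f$ against each $\mathbf{e}_{n_1}\otimes\cdots\otimes\mathbf{e}_{n_d}$ and concluding that every Taylor coefficient vanishes. You are in fact more careful than the paper on the analytic side: the paper interchanges the Taylor sum with the integral without comment, whereas you justify it via uniform convergence on compacta, Cauchy--Schwarz, and Fubini.

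However, the point you defer as ``secondary''---the product representation of the $d$-dimensional measure---is precisely where both your argument and the paper's have a genuine gap, because with the measure as literally defined it is false. The weight is $e^{-\sigma^2|\bm{z}|^2}e^{e^{-\sigma_0^2|\bm{z}|^2}-1}$ with $|\bm{z}|^2=\sum_{j=1}^d|z_j|^2$; the Gaussian factor splits across coordinates, but the double-exponential factor does not, since
\begin{align*}
e^{-\sigma_0^2|\bm{z}|^2}-1\;\neq\;\sum_{j=1}^d\left(e^{-\sigma_0^2|z_j|^2}-1\right)
\end{align*}
for generic $\bm{z}\in\mathbf{C}^d$ whenever $\sigma_0>0$ and $d\geq2$. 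Hence $d\mu_{\sigma,\sigma_0,d}$ is not the product of the one-dimensional measures $d\mu_{\sigma,\sigma_0}(z_j)$, and the inner product of elementary tensors does not split. What survives under the literal definition is exactly the rotation-invariance part of your totality argument: the weight is radial in each coordinate separately, so distinct monomials remain orthogonal and totality still holds. But the elementary tensors are then no longer unit vectors, so the normalization \eqref{eq_ONB_GGRBF} is wrong for $d\geq2$, and the system would be an orthogonal rather than orthonormal basis. The paper's proof has the same defect---it silently writes $d\mu_{\sigma,\sigma_0,\mathbf{C}^d}$ as $d\mu_{\sigma,\sigma_0}(z_1)\wedge\cdots\wedge d\mu_{\sigma,\sigma_0}(z_d)$---so the fix for both is the same: define (or redefine) the $d$-dimensional measure as the product of the one-dimensional ones, which is evidently what is intended, after which your argument goes through as written.
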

\begin{proof}
    We establish our result for $d=1$ for the initial basic understanding. For this, let us show that $\left\{\mathbf{e}_n\right\}_{n\in\mathbf{W}}$ forms an orthonormal system. So, consider $z\in\mathbf{C}$ and let $m,n\in\mathbf{W}$. Then,
    %\begin{align*}
    %    &\mathbf{e}_n(z)\overline{\mathbf{e}_m(z)}e^{-\sigma^2|z|^2}e^{e^{-\sigma_0|z|^2}-1}\\=&\sqrt{\frac{\sigma^{2n}}{n!\mathcal{F}_{n,\hat{\sigma},1}}}\sqrt{\frac{\sigma^{2m}}{m!\mathcal{F}_{m,\hat{\sigma},1}}}%\sqrt{\frac{n!}{e\sigma^{2n+2}}\mathcal{F}_{n,\hat{\sigma},1}\frac{m!}{e\sigma^{2m+2}}\mathcal{F}_{m,\hat{\sigma},1}}
    %    \left(z^n\overline{z^m}\right)e^{-\sigma^2|z|^2}e^{e^{-\sigma_0|z|^2}-1}.
    %\end{align*}
    %Hence, for $n,m\in\mathbf{W}$ we have
    \begin{align*}
        %&
        \langle\mathbf{e}_n,\mathbf{e}_m\rangle_{\sigma,\sigma_0}%\\
        =&\frac{e\sigma^2}{2\pi}
        \int_{\mathbf{C}}\mathbf{e}_n(z)\overline{\mathbf{e}_m(z)}d\mu_{\sigma,\sigma_0}(z)\\
        %=&\frac{e\sigma^2}{2\pi}\int_{\mathbf{C}}\mathbf{e}_n(z)\overline{\mathbf{e}_m(z)}e^{-\sigma^2|z|^2}e^{e^{-\sigma_0|z|^2}-1}dA(z)\\
        =&\frac{e\sigma^2}{2\pi}\sqrt{\frac{\sigma^{2n}}{n!\mathcal{F}_{n,\hat{\sigma},1}}}\sqrt{\frac{\sigma^{2m}}{m!\mathcal{F}_{m,\hat{\sigma},1}}}\int_{\mathbf{C}}z^n\overline{z^m}d\mu_{\sigma,\sigma_0}(z)\\
        %=&\frac{e\sigma^2}{2\pi}\frac{\sigma^{2n}}{n!\mathcal{F}_{n,\hat{\sigma},1}}\frac{2\pi n!}{e\sigma^{2n+2}}\mathcal{F}_{n,\hat{\sigma},1}\\
        =&\begin{cases}
        1
            &\text{if $n=m$}\\
            0&\text{otherwise}
        \end{cases}\quad\text{\emph{(use \autoref{LEMMA_lemmaiii.3})}}.
    \end{align*}
    The above result concludes that $\left\{\mathbf{e}_n\right\}_{n\in\mathbf{W}}$ is actually an orthonormal system. To this end, we have to establish that it is also complete. So, for this, pick $f\in H_{\sigma,\sigma_0,\mathbf{C}}$ with $f(z)=\sum_{l=0}^\infty a_lz^l$ and observe that
    \begin{align*}
        %&
        \left\langle f,\mathbf{e}_n\right\rangle_{\sigma,\sigma_0}%\\
        =&\frac{e\sigma^2}{2\pi}\int_{\mathbf{C}}f(z)%\sqrt{\frac{\sigma^{2n}}{n!\mathcal{F}_{n,\hat{\sigma},1}}}
        \overline{\mathbf{e}_n(z)}d\mu_{\sigma,\sigma_0}(z)\\
        =&\frac{e\sigma^2}{2\pi}\sum_{l=0}^\infty a_l\int_{\mathbf{C}}z^l%\sqrt{\frac{\sigma^{2n}}{n!\mathcal{F}_{n,\hat{\sigma},1}}}
        \overline{\mathbf{e}_n(z)}d\mu_{\sigma,\sigma_0}(z)\\
        =&\frac{e\sigma^2}{2\pi}\sqrt{\frac{\sigma^{2n}}{n!\mathcal{F}_{n,\hat{\sigma},1}}}
        \sum_{l=0}^\infty a_l\int_{\mathbf{C}}z^l\overline{z^n}d\mu_{\sigma,\sigma_0}(z)\\
        =&\frac{e\sigma^2}{2\pi}\sqrt{\frac{\sigma^{2n}}{n!\mathcal{F}_{n,\hat{\sigma},1}}}\sum_{l=0}^\infty a_l%\cdot\ldots\\&\quad
        \left(\sqrt{\frac{2\pi l!}{e\sigma^{2l+2}}\mathcal{F}_{n,\hat{\sigma},1}}\sqrt{\frac{2\pi n!}{e\sigma^{2n+2}}\mathcal{F}_{n,\hat{\sigma},1}}\right)\delta_{ln}\\
        =&%\frac{e\sigma^2}{2\pi}
        \sqrt{\frac{\sigma^{2n}}{n!\mathcal{F}_{n,\hat{\sigma},1}}}a_n\frac{n!\mathcal{F}_{n,\hat{\sigma},1}}{\sigma^{2n}}\\
        =&\left[\sqrt{\frac{\sigma^{2n}}{n!\mathcal{F}_{n,\hat{\sigma},1}}}\right]^{-1}a_n.
    \end{align*}
    Since the constant $\nicefrac{\sigma^{2n}}{n!\mathcal{F}_{n,\hat{\sigma},1}}\neq0$ for any choice of $n$,
    hence, the condition that $\langle f,\mathbf{e}_n\rangle=0$ for all $n\in\mathbf{W}$ yields that $a_n=0$ for all $n\in\mathbf{W}$, which results in conclusion that $f\equiv0$. Therefore, $\left\{\mathbf{e}_{n}\right\}_{n\in\mathbf{W}}$ is complete. Now, we establish these results in $d-$dimensional situation by employing the tensor product notation \autoref{subsection_tensorProdNotation}. To this end, we see that 
    \begin{align*}
\langle\mathbf{e}_{n_1}\otimes\cdots\otimes\mathbf{e}_{n_d},\mathbf{e}_{m_1}\otimes\cdots\otimes\mathbf{e}_{m_d}\rangle_{\sigma,\sigma_0,d}=\prod_{j=1}^d\langle\mathbf{e}_{n_j},\mathbf{e}_{m_j}\rangle_{\sigma,\sigma_0}.
    \end{align*}
     Hence the orthonormality of $\left\{\mathbf{e}_{n_1}\otimes\cdots\otimes\mathbf{e}_{n_d}\right\}_{n_1,\ldots n_d\in,\mathbf{W}^d}$ is established due to the orthonormality of each $\langle\mathbf{e}_{n_j},\mathbf{e}_{m_j}\rangle_{\sigma,\sigma_0}$. We still need to ensure that this $d-$dimensional orthonormal system is complete. %For this,  we fix %$f(\bm{z})=\sum_{l=0}^\infty\left(\sum_{|m|=l}a_l\right)\bm{z}^l%\sum_{n_1,\ldots n_d\in,\mathbf{W}^d}a_{n_1,\ldots,n_d}\bm{z}^{n}
    %$ for some $\bm{z}=\left(z_1,\cdots,z_d\right)\in\mathbf{C}^d$ and $|m|=m_1+\cdots+m_d$ is regarded in the multi-index notation. 
    Now, observe
    \begin{align*}
        %&
        \langle f,\mathbf{e}_{n_1}\otimes\cdots\otimes\mathbf{e}_{n_d}\rangle_{\sigma,\sigma_0,d}%\\
        =&\left(\frac{e\sigma^2}{2\pi}\right)^d\int_{\mathbf{C}^d}f(\bm{z})\overline{\mathbf{e}_{n_1}\otimes\cdots\otimes\mathbf{e}_{n_d}\left(\bm{z}\right)}d\mu_{\sigma,\sigma_0,\mathbf{C}^d}\left(\bm{z}\right)\\
        =&\left(\frac{e\sigma^2}{2\pi}\right)^d\sum_{l_1,\ldots,l_d}^\infty%\sum_{|m|=l}
        a_{l_1,\ldots,l_d}%\sum_{n_1,\ldots n_d\in,\mathbf{W}^d}a_{n_1,\ldots,n_d}
        \mathds{I}_{l,d}, 
    \end{align*}
    where $\mathds{I}_{l,d}=%\left(\frac{e\sigma^2}{2\pi}\right)^d
\int_{\mathbf{C}^d}\bm{z}^l\left(\mathbf{e}_{n_1}\otimes\cdots\otimes\mathbf{e}_{n_d}\left(\overline{\bm{z}}\right)\right)d\mu_{\sigma,\sigma_0,\mathbf{C}^d}\left(\bm{z}\right)$. We further can simplify $\mathds{I}_d$ as follows:
    \begin{align*}
        %&
        \mathds{I}_{l,d}=&%\\=&\left(\frac{e\sigma^2}{2\pi}\right)^d
        \int_{\mathbf{C}^d}\bm{z}^l\mathbf{e}_{n_1}(\overline{z_1})\wedge\cdots\wedge\mathbf{e}_{n_d}(\overline{z_d})%\\&\quad 
        d\mu_{\sigma,\sigma_0}(z_1)\wedge\cdots\wedge d\mu_{\sigma,\sigma_0}(z_d)\\
        =&\prod_{j=1}^d%\left(\frac{e\sigma^2}{2\pi}\right)
        \left(\int_\mathbf{C}z_j^{l_j}\mathbf{e}_{n_j}(\overline{z_j})d\mu_{\sigma,\sigma_0}(z_j)\right)\\
        =&\prod_{j=1}^d\left(\int_{\mathbf{C}}z_j^{l_j}\overline{z_j}^{n_j}d\mu_{\sigma,\sigma_0}(z_j)\right)\\
        =&\prod_{j=1}^d\left(\sqrt{\frac{2\pi l_j!}{e\sigma^{2l_j+2}}\mathcal{F}_{l_j,\hat{\sigma},1}}\sqrt{\frac{2\pi n_j!}{e\sigma^{2n_j+2}}\mathcal{F}_{n_j,\hat{\sigma},1}}\right)\delta_{l_jn_j}%.\\
        %\implies\left(\frac{e\sigma^2}{2\pi}\right)^d\sum_{l_1,\ldots,l_d}^\infty%\sum_{|m|=l}
        a_{l_1,\ldots,l_d}.%\sum_{n_1,\ldots n_d\in,\mathbf{W}^d}a_{n_1,\ldots,n_d}
        %\mathds{I}_{l,d}=&\prod_{j=1}^d\left[\sqrt{\frac{\sigma^{2n}}{n!\mathcal{F}_{n,\hat{\sigma},1}}}\right]^{-1}a_{n_1,\ldots,n_d}%\left(\frac{2\pi}{e\sigma^2}\right)^d
        %=&\prod_{j=1}^d\left(\sqrt{\frac{\sigma^{2n_j}}{n_j!\mathcal{F}_{{n_j},\hat{\sigma},1}}}\int_{\mathbf{C}}z_j^{l_j}\overline{z_j}^{n_j}d\mu_{\sigma,\sigma_0}(z_j)\right)\\
        %=&\prod_{j=1}^d\left(\sqrt{\frac{\sigma^{2n_j}}{n_j!\mathcal{F}_{{n_j},\hat{\sigma},1}}}\right)
    \end{align*}
    Finally, \begin{align*}
    \left(\frac{e\sigma^2}{2\pi}\right)^d\sum_{l_1,\ldots,l_d}^\infty%\sum_{|m|=l}
        a_{l_1,\ldots,l_d}\mathds{I}_{l,d}=&\left(\prod_{j=1}^d\left[\sqrt{\frac{\sigma^{2n_j}}{n_j!\mathcal{F}_{n_j,\hat{\sigma},1}}}\right]^{-1}\right)a_{n_1,\ldots,n_d}.
        %\langle f,\mathbf{e}_{n_1}\otimes\cdots\otimes\mathbf{e}_{n_d}\rangle=&\sum_{l=0}^\infty\sum_{|m|=l}a_m\delta_{|m|=l,(n_1,\ldots,n_d)}\\=&\sum_{n_1,\ldots,n_d}a_n.
    \end{align*}
  The further result for completeness in $d-$dimension follows a routine procedure from single-dimension case as already discussed before.   
\end{proof}
The following theorem provides the reproducing kernel for the Hilbert space $H_{\sigma,\sigma_0,\mathbf{C}^d}$ defined in \eqref{eq_8hilbertspace}.
\begin{theorem}
    For $\sigma>0,~\sigma_0\geq0$ and $\hat{\sigma}=\nicefrac{\sigma^2}{\sigma_0^2}$, the reproducing kernel for the space $H_{\sigma,\sigma_0,\mathbf{C}^d}$ is %(indeed\footnote{because \autoref{theorem_aronsjan}})
    given as 
    \begin{align}
        K\left(\bm{z},\bm{w}\right)\coloneqq\sum_{n_1,\ldots,n_d=0}^\infty\lambda_{n}\left(\bm{z}\overline{\bm{w}}\right)^n,
    \end{align}
    where muti-index notation is employed: $n=\left(n_1,\ldots,n_d\right)$ and $\lambda_n=\prod_{i=1}^d\frac{\sigma^{2n_i}}{n_i!\mathcal{F}_{n_i,\hat{\sigma},1}}$.
\end{theorem}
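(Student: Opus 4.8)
The plan is to read the statement off \autoref{theorem_aronsjan} once the orthonormal basis produced in \autoref{theorem_theorem3.4} is substituted. By \autoref{theorem_aronsjan}, for any orthonormal basis $\{\mathbf{e}_i\}_{i\in\mathcal{I}}$ of an RKHS the unique reproducing kernel is the absolutely convergent series $k(\bm{z},\bm{w})=\sum_{i\in\mathcal{I}}\mathbf{e}_i(\bm{z})\overline{\mathbf{e}_i(\bm{w})}$. For $H_{\sigma,\sigma_0,\mathbf{C}^d}$ the index set is $\mathcal{I}=\mathbf{W}^d$ and the basis vectors are the tensor products $\mathbf{e}_{n_1}\otimes\cdots\otimes\mathbf{e}_{n_d}$, so the entire argument reduces to inserting the closed form \eqref{eq_ONB_GGRBF} and regrouping factors.

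First I would expand a single summand via the factorization of the tensor product. By the convention of \autoref{subsection_tensorProdNotation}, for $\bm{z}=(z_1,\ldots,z_d)$ one has $(\mathbf{e}_{n_1}\otimes\cdots\otimes\mathbf{e}_{n_d})(\bm{z})=\prod_{j=1}^d\mathbf{e}_{n_j}(z_j)$, and substituting \eqref{eq_ONB_GGRBF} into each factor gives
\begin{align*}
(\mathbf{e}_{n_1}\otimes\cdots\otimes\mathbf{e}_{n_d})(\bm{z})=\prod_{j=1}^d\sqrt{\frac{\sigma^{2n_j}}{n_j!\,\mathcal{F}_{n_j,\hat{\sigma},1}}}\;z_j^{n_j}.
\end{align*}
Since every coefficient $\sqrt{\sigma^{2n_j}/(n_j!\,\mathcal{F}_{n_j,\hat{\sigma},1})}$ is a positive real number, conjugation of the corresponding basis vector at $\bm{w}$ acts only on the monomial $\prod_j w_j^{n_j}$ and leaves the coefficients untouched.

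Multiplying the $\bm{z}$-term by the conjugated $\bm{w}$-term and collecting, the summand indexed by $n=(n_1,\ldots,n_d)$ becomes $\prod_{j=1}^d\frac{\sigma^{2n_j}}{n_j!\,\mathcal{F}_{n_j,\hat{\sigma},1}}(z_j\overline{w_j})^{n_j}$, which is exactly $\lambda_n(\bm{z}\overline{\bm{w}})^n$ once one reads $\lambda_n=\prod_{i=1}^d\sigma^{2n_i}/(n_i!\,\mathcal{F}_{n_i,\hat{\sigma},1})$ and $(\bm{z}\overline{\bm{w}})^n=\prod_{j=1}^d(z_j\overline{w_j})^{n_j}$ in the multi-index notation of the statement. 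Summing over all $n\in\mathbf{W}^d$ delivers the claimed kernel, and the absolute convergence comes for free from \autoref{theorem_aronsjan}.

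The computation is routine, so I do not expect a genuine obstacle; the only place deserving care is the bookkeeping that translates the factorized tensor picture into the multi-index picture, namely matching the product of per-coordinate coefficients with $\lambda_n$ and confirming that $\prod_j(z_j\overline{w_j})^{n_j}$ is the intended meaning of $(\bm{z}\overline{\bm{w}})^n$. As a sanity check I would note that each $\mathcal{F}_{n_j,\hat{\sigma},1}$ is a positive, bounded factor, so the $\lambda_n$ are genuine positive reals with no stray phase; should one wish to bypass \autoref{theorem_aronsjan} for convergence, the decay $\lambda_n\le\prod_j\sigma^{2n_j}/n_j!$ yields direct convergence of the series everywhere on $\mathbf{C}^d\times\mathbf{C}^d$.
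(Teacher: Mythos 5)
Your proposal is correct, but it takes a genuinely different route from the paper. You derive the kernel as an immediate corollary of \autoref{theorem_aronsjan}: since the Corollary following \autoref{lemma_lemma4} establishes that $H_{\sigma,\sigma_0,\mathbf{C}^d}$ is an RKHS, and \autoref{theorem_theorem3.4} supplies the complete orthonormal system $\left\{\mathbf{e}_{n_1}\otimes\cdots\otimes\mathbf{e}_{n_d}\right\}_{n\in\mathbf{W}^d}$, the identity
\begin{align*}
K(\bm{z},\bm{w})=\sum_{n\in\mathbf{W}^d}\left(\mathbf{e}_{n_1}\otimes\cdots\otimes\mathbf{e}_{n_d}\right)(\bm{z})\,\overline{\left(\mathbf{e}_{n_1}\otimes\cdots\otimes\mathbf{e}_{n_d}\right)(\bm{w})}=\sum_{n}\lambda_n\left(\bm{z}\overline{\bm{w}}\right)^n
\end{align*}
follows from pure bookkeeping, exactly as you carry it out (the positivity of the coefficients, so that conjugation touches only the monomials, is the one point needing care and you flag it). The paper instead proves the theorem from scratch against the two-part \autoref{defintion_RK}: it first computes $\|K(\cdot,\bm{w})\|_{\sigma,\sigma_0,d}^2$ by expanding the double series and invoking the orthogonality integrals of \autoref{LEMMA_lemmaiii.3} to show $K(\cdot,\bm{w})\in H_{\sigma,\sigma_0,\mathbf{C}^d}$, and then verifies the reproducing property $\langle f,K(\cdot,\bm{w})\rangle_{\sigma,\sigma_0,\mathbf{C}^d}=f(\bm{w})$ by a term-by-term inner-product computation on the power series of $f$; the paper even remarks afterwards that its argument deliberately uses only the definition. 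Your route is shorter, reuses the completeness work already done in \autoref{theorem_theorem3.4} (which the paper's proof does not actually exploit), and yields absolute convergence and uniqueness of the kernel for free from \autoref{theorem_aronsjan}; the paper's route is more self-contained in that it re-derives membership and the reproducing property explicitly, which doubles as a concrete consistency check of the orthogonality integrals. Ironically, the paper sets up your approach --- it states before \autoref{theorem_theorem3.4} that the ONB is being determined ``in the light of \autoref{theorem_aronsjan}'' --- but then never cashes in that theorem; your proposal completes that intended shortcut. Your closing remark also stands: since each $\mathcal{F}_{n_j,\hat{\sigma},1}\geq 1$, one has $\lambda_n\leq\prod_j\sigma^{2n_j}/n_j!$, so the series is dominated by $\prod_j e^{\sigma^2|z_j||w_j|}$ and converges absolutely without any appeal to \autoref{theorem_aronsjan}.
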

\begin{proof}
   We will demonstrate the desired proof as follows:
   \begin{enumerate}
       \item For $\bm{w}\in\mathbf{C}^d$, we will show that $\|K\left(\cdot,\bm{w}\right)\|_{\sigma,\sigma_0,d}<\infty$. 
       \begin{align*}
           %&
           \|K\left(\cdot,\bm{w}\right)\|_{\sigma,\sigma_0,d}^2%\\
           =&\left(\frac{e\sigma^2}{2\pi}\right)^d\int_{\mathbf{C}^d}|K\left(\bm{z},\bm{w}\right)|^2d\mu_{\sigma,\sigma_0,d}\left(\bm{z}\right)\\
           %=&\left(\frac{e\sigma^2}{2\pi}\right)^d\int_{\mathbf{C}^d}\left|\sum_{l=0}^\infty\left(\sum_{|m|=l}\lambda_m\left(\bm{z\overline{w}}\right)^m\right)\right|^2d\mu_{\sigma,\sigma_0,d}\left(\bm{z}\right)\\
           %=&\left(\frac{e\sigma^2}{2\pi}\right)^d\int_{\mathbf{C}^d}\sum_{l=0}^\infty\left(\sum_{|m|=l}\lambda_m\left(\bm{z\overline{w}}\right)^m\right)\overline{\sum_{l=0}^\infty\left(\sum_{|m|=l}\lambda_m\left(\bm{z\overline{w}}\right)^m\right)}d\mu_{\sigma,\sigma_0,d}\left(\bm{z}\right)\\
           %=&\left(\frac{e\sigma^2}{2\pi}\right)^d\sum_{l=0}^\infty\sum_{|m|=l}\lambda_m\bm{w}^m\sum_{\mathrm{l}=0}^\infty\sum_{|\mathrm{m}|=\mathrm{l}}\lambda_{\mathrm{m}}\bm{\overline{w}}^{\mathrm{m}}\int_{\mathbf{C}^d}\bm{\overline{z}}^m\bm{z}^{\mathrm{m}}d\mu_{\sigma,\sigma_0,d}\left(\bm{z}\right)\\
           %=&\left(\frac{e\sigma^2}{2\pi}\right)^d\sum_{l=0}^\infty\sum_{|m|=l}\lambda_m\bm{w}^m\sum_{\mathrm{l}=0}^\infty\sum_{|\mathrm{m}|=\mathrm{l}}\lambda_{\mathrm{m}}\bm{\overline{w}}^{\mathrm{m}}\left(\frac{2\pi m!}{e\sigma^{2m+2}}\mathcal{F}_{m,\hat{\sigma},1}\right)^d\\
           %=&\sum_{l=0}^\infty\frac{|w|^{2m}}{\frac{l!\mathcal{F}_{l,\hat{\sigma},1}}{\sigma^{2l}}}
           =&\left(\frac{e\sigma^2}{2\pi}\right)^d\sum_{n_1,\ldots,n_d}^\infty\sum_{m_1,\ldots,m_d}^\infty\lambda_n\lambda_m\bm{w}^n\bm{\overline{w}}^m%\cdot\ldots\\&\quad
           \int_{\mathbf{C}^d}\bm{z}^n\bm{\overline{z}}^md\mu_{\sigma,\sigma_0,d}\left(\bm{z}\right)\numberthis\label{eq_17_GGRBF}\\
           =&\left(\frac{e\sigma^2}{2\pi}\right)^d\sum_{n_1,\ldots,n_d}^\infty\lambda_n^2|\bm{w}|^{2n}%\cdot\ldots\\&\quad
           \left(\prod_{n_i=1}^d\int_{\mathbf{C}}z_i^{n_i}\overline{z_i}^{m_i}d\mu_{\sigma,\sigma_0}(z_i)\right)%\text{(\emph{\autoref{theorem_theorem3.4}})}
           \numberthis\label{eq_18_GGRBF}\\
           =&\left(\frac{e\sigma^2}{2\pi}\right)^d\sum_{n_1,\ldots,n_d}^\infty\lambda_n^2|\bm{w}|^{2n}\left(\prod_{n_i=1}^d\frac{2\pi n_i!}{\sigma^{2n_i+2}}\mathcal{F}_{n_i,\hat{\sigma},1}\right)\\
           =&\sum_{n_1,\ldots,n_d}^\infty\frac{|\bm{w}|^{2n}}{\prod_{i=1}^d\frac{n_i!}{\sigma^{2n_i}}\mathcal{F}_{n_i,\hat{\sigma},1}}.
       \end{align*}
    %For all $\bm{w}\in\mathbf{C}^d$,
    We used the result of \autoref{theorem_theorem3.4} from \eqref{eq_17_GGRBF} to\eqref{eq_18_GGRBF}. For all $\bm{w}\in\mathbf{C}^d$, the quantity $\sum_{n_1,\ldots,n_d}^\infty\frac{|\bm{w}|^{2n}}{\prod_{i=1}^d\left(\nicefrac{n_i!}{\sigma^{2n_i}}\right)\mathcal{F}_{n_i,\hat{\sigma},1}}$ achieves convergence. % $\forall\bm{w}\in\mathbf{C}^d$.
    This implies that $\|K\left(\cdot,{\bm{w}}\right)\|_{\sigma,\sigma_0,d}<\infty$ for all $\bm{w}\in\mathbf{C}^d$. Therefore, the kernel function $K(\cdot,\bm{w})\in H_{\sigma,\sigma_0,\mathbf{C}^d}$.
       \item Now, in order to establish the reproducing property of $K(\cdot,\bm{w})$ %So for this,
       pick an arbitrary $f=\sum_{n_1,\cdots,n_d}^\infty a_{n_1,\ldots,n_d}\bm{w}^n\in H_{\sigma,\sigma_0,\mathbf{C}^d}$. Then, consider the inner product of $f$ with $K(\cdot,\bm{w})$ as follows:
       \begin{align*}
           %&
           \langle f,K(\cdot,\bm{w})\rangle_{\sigma,\sigma_0,{\mathbf{C}}^d}%\\
           =&\left(\frac{e\sigma^2}{2\pi}\right)^d\int_{\mathbf{C}^d}f(\bm{z})\overline{K\left(\bm{z},\bm{w}\right)}d\mu_{\sigma,\sigma_0,\mathbf{C}^d}\left(\bm{z}\right)\\
           =&\left(\frac{e\sigma^2}{2\pi}\right)^d\sum_{n_1,\cdots,n_d}^\infty\sum_{l_1,\cdots,l_d}^\infty a_{n_1,\ldots,n_d}\lambda_{l_1,\ldots,l_d}\bm{w}^n%\cdot\ldots\\&\quad
           \left(\prod_{i=1}^d\frac{2\pi}{e\sigma^2}\frac{\sigma^{2n_i}}
           {n_i!\mathcal{F}_{n_i,\hat{\sigma},1}}\right)\\
           =&\sum_{n_1,\ldots,n_d=0}^\infty a_{n_1,\ldots,n_d}\bm{w}^n\\
           =&f\left(\bm{w}\right).
       \end{align*}
       Hence, the desired result is achieved.
       %\begin{align*}
       %    &\langle f,K(\cdot,\bm{w})\rangle_{\sigma,\sigma_0,{\mathbf{C}}^d}\\=&\left(\frac{e\sigma^2}{2\pi}\right)^d\int_{\mathbf{C}^d}f(\bm{z})\overline{K\left(\bm{z},\bm{w}\right)}d\mu_{\sigma,\sigma_0,\mathbf{C}^d}\left(\bm{z}\right)\\
       %    =&\left(\frac{e\sigma^2}{2\pi}\right)^d\int_{\mathbf{C}^d}\sum_{n_1+\cdots+n_d=0}^\infty a_{n_1,\ldots,n_d}\bm{z}^n\sum_{l_1+\ldots+l_d=0}^\infty\lambda_{l_1,\ldots,l_d}\left(\bm{w\overline{z}}\right)^ld\mu_{\sigma,\sigma_0,\mathbf{C}^d}\left(\bm{z}\right)\\
       %    =&\left(\frac{e\sigma^2}{2\pi}\right)^d\sum_{n_1+\cdots+n_d=0}^\infty\sum_{l_1+\cdots+l_d=0}^\infty a_{n_1,\ldots,n_d}\lambda_{l_1,\ldots,l_d}\bm{w}^n\int_{\mathbf{C}^d}\bm{z}^n\overline{\bm{z}}^ld\mu_{\sigma,\sigma_0,\mathbf{C}^d}\left(\bm{z}\right)\\
       %    =&\left(\frac{e\sigma^2}{2\pi}\right)^d\sum_{n_1+\cdots+n_d=0}^\infty%\sum_{l_1+\cdots+l_d=0}^\infty 
       %    a_{n_1,\ldots,n_d}\lambda_{n_1,\ldots,n_d}\bm{w}^n\left(\prod_{i=1}^d\frac{2\pi}{e\sigma^2}\frac{\sigma^{2n_i}}
       %    {n_i!\mathcal{F}_{n_i,\hat{\sigma},1}}\right)\\
       %    =&\sum_{n_1+\ldots+n_d=0}a_{n_1,\ldots,n_d}\bm{w}^n\\
       %    =&f\left(\bm{w}\right).
       %\end{align*}
   \end{enumerate}
\end{proof}
The proof in the preceding theorem to demonstrate reproducing kernel nature of $K\left(\bm{z},\bm{w}\right)$ of $H_{\sigma,\sigma_0,\mathbf{C}^d}$ utilizes the basic machinery borrowed from the two-part definition for reproducing kernel given in \autoref{defintion_RK}.
%\appendices
%\newpage
\section{Empirical Evidence and Results Comparison}\label{section_comparisonresults}
%In this section, we present empirical evidence on {[\tt{MATLAB}]} on the GGRBF Kernel application in \emph{kernel regression, SVM} and \emph{neural activation}. 
%These outperforming results will also be compared with its respective counter-parts on {[\tt{MATLAB}]}.
\subsection{Kernel Regression}
\subsubsection{Example-1}
Kernel regression of $f(x)=e^{\frac{(1-9x^2)}{4}}+\tan{x}+x^{\frac{1}{6}}+\sin{x^{\vartheta(n)}}$ is performed via both GRBF and \eqref{eq_GGRBF}. Here $\vartheta(n)$ is uniform random distributed number in $(0,1)$, $x=n\in\mathbf{Z}_{101}$.
\begin{figure}[H]
  \centering
  \label{fig:regression-1}
  \includegraphics[width=.45\linewidth,scale=1]{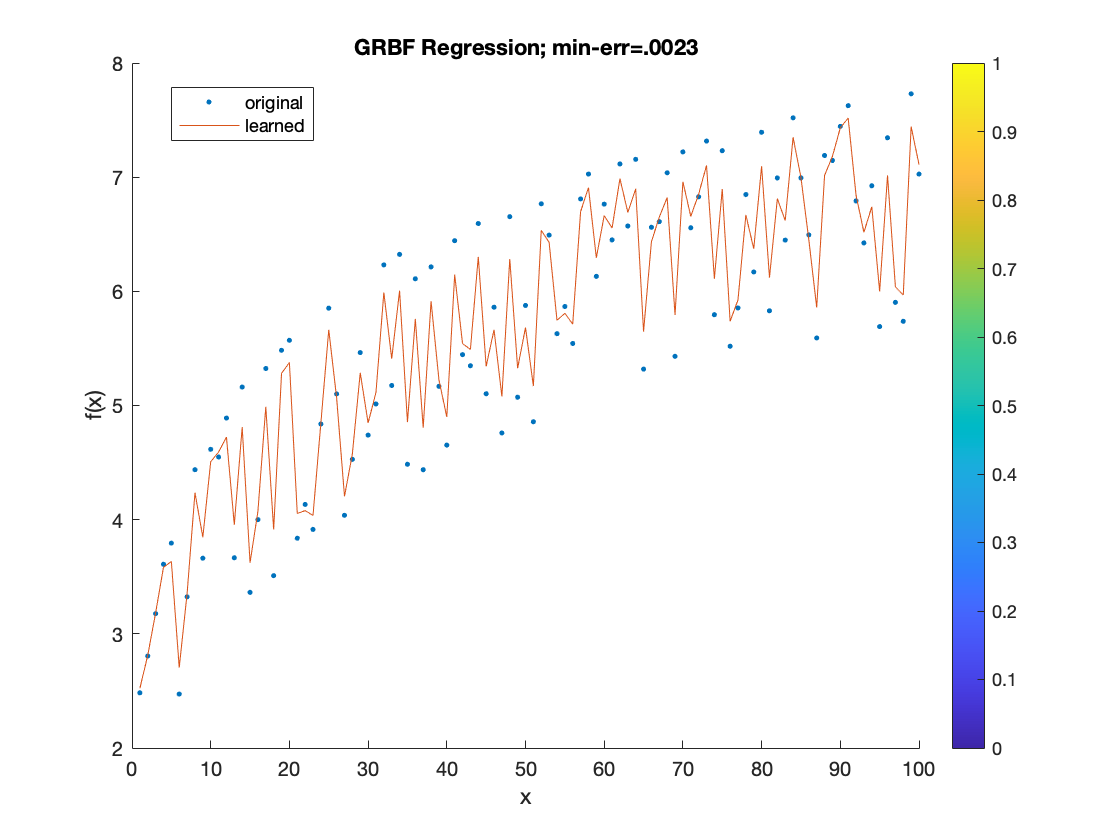}
  \includegraphics[width=.45\linewidth,scale=1]{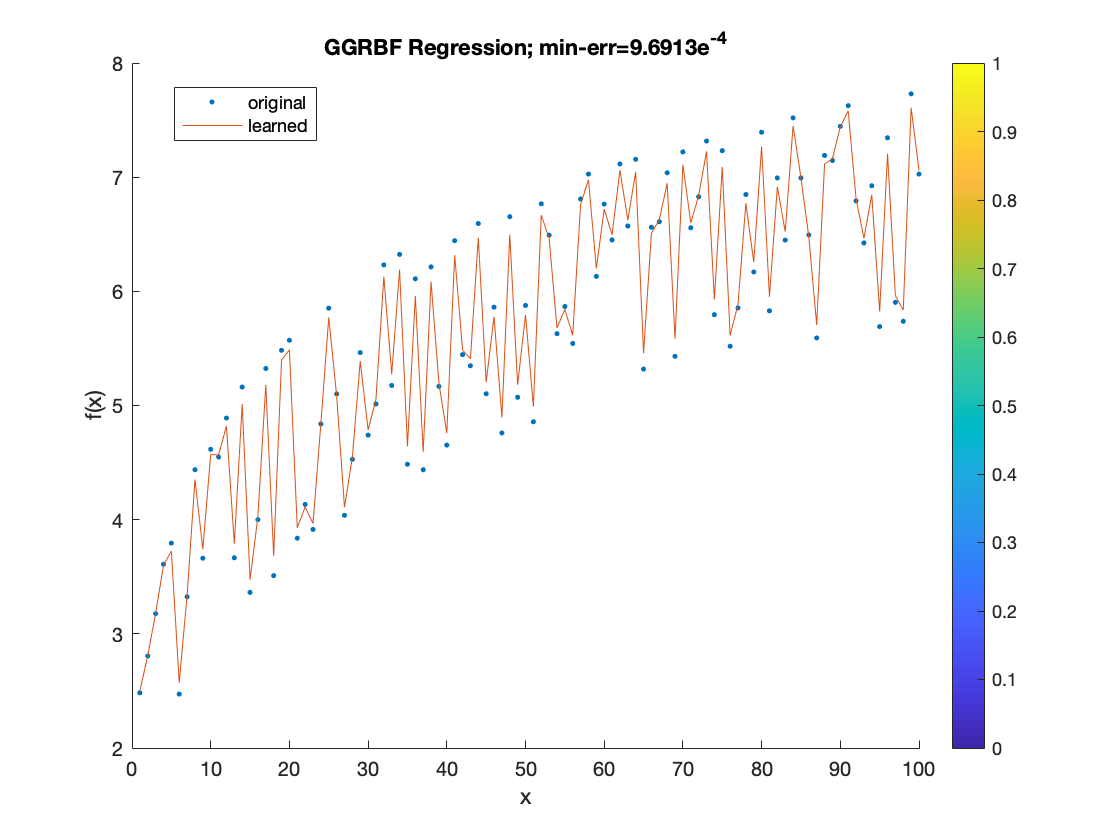}
  \caption{{[\tt{MATLAB}]} Kernel regression of $f(x)=e^{\frac{(1-9x^2)}{4}}+\tan{x}+x^{\frac{1}{6}}+\sin{x^{\vartheta(n)}}$.}
  \label{fig:regression-2}
\end{figure}
\subsubsection{Example-2}
Kernel regression of $f(x)=e^{\sin{x}-\sin{x^2}}+\sqrt{2\pi}|x+\cos{\vartheta(n)}|%e^{\frac{(1-9x^2)}{4}}+\tan{x}+x^{\frac{1}{6}}+\sin{x^{\vartheta(n)}}
$ is performed via both GRBF and \eqref{eq_GGRBF}. Here $\vartheta(n)$ is uniform random distributed number in $(0,1)$, $x=n\in\mathbf{Z}_{101}$.
\begin{figure}[H]
  \centering
  \label{fig:regression-3}
  \includegraphics[width=.45\linewidth,scale=1]{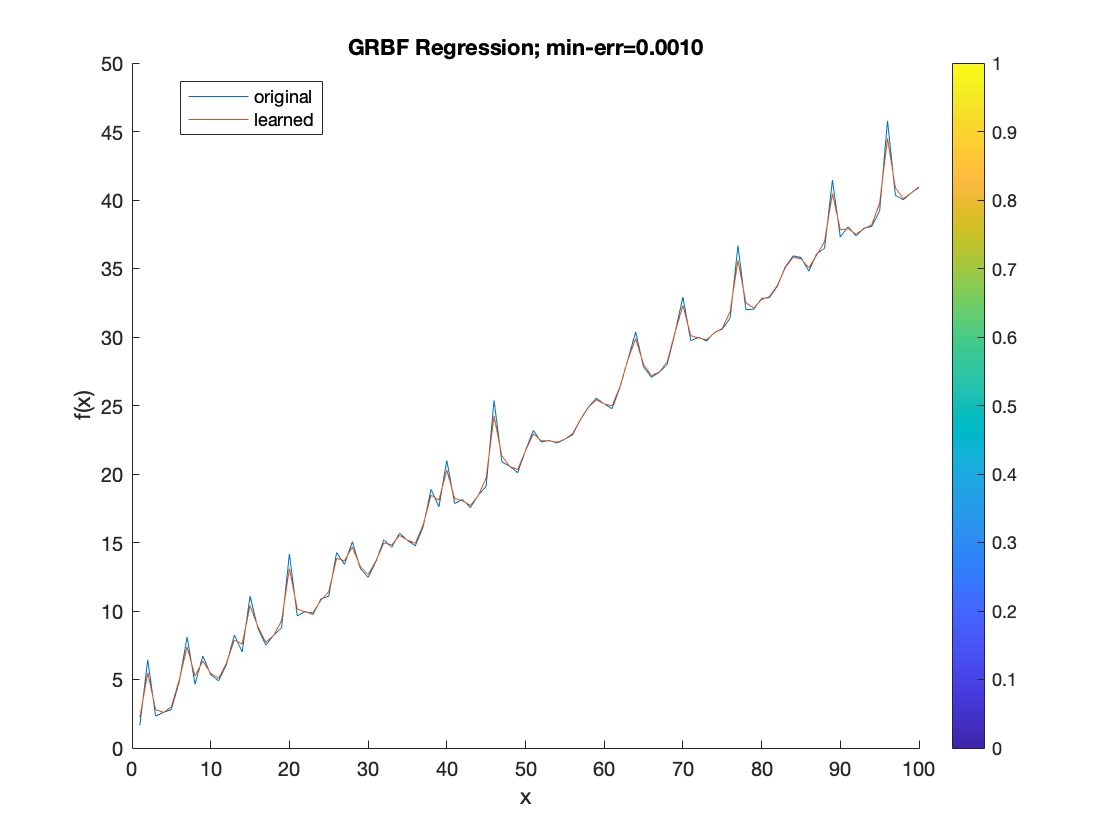}
  \includegraphics[width=0.45\linewidth,scale=1]{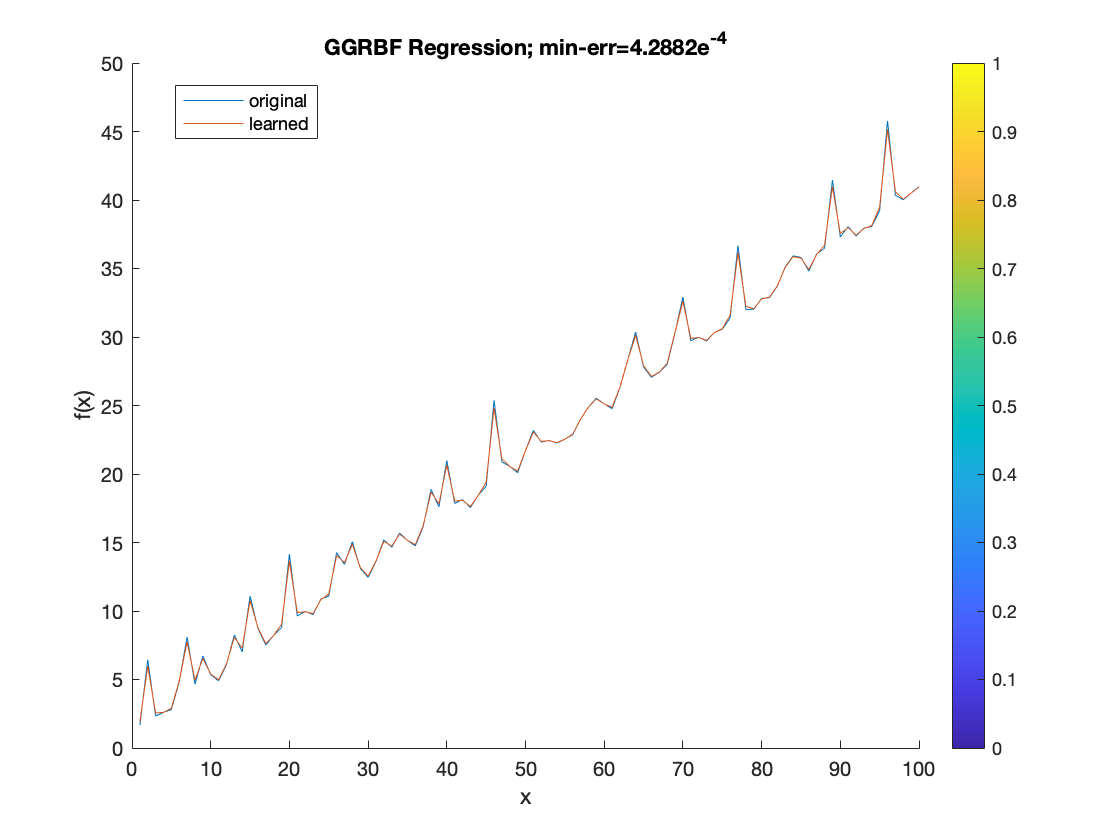}
  \caption{{[\tt{MATLAB}]} Kernel regression of $f(x)=e^{\sin{x}-\sin{x^2}}+\sqrt{2\pi}|x+\cos{\vartheta(n)}|$.}
  \label{fig:regression-4}
\end{figure}
\subsection{Support Vector Machine}
Support vector machine (SVM) is implemented for the the data classification and is performed via different choices of kernels; these kernels includes the traditional \emph{Sigmoid}, GRBF and \eqref{eq_GGRBF}.
\begin{figure}[H]
  \centering
  \label{fig:svm-1}\includegraphics[width=.45\linewidth,scale=1]{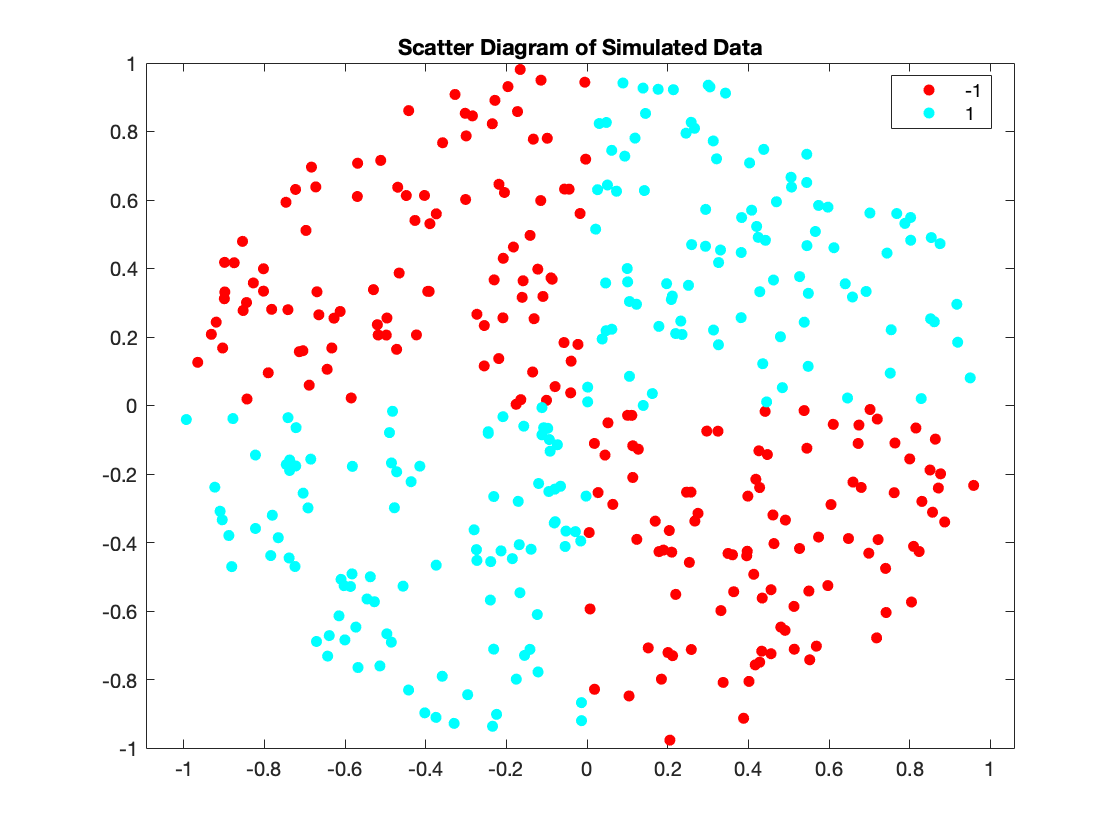}
  \label{fig:svm-2}\includegraphics[width=.45\linewidth,scale=1]{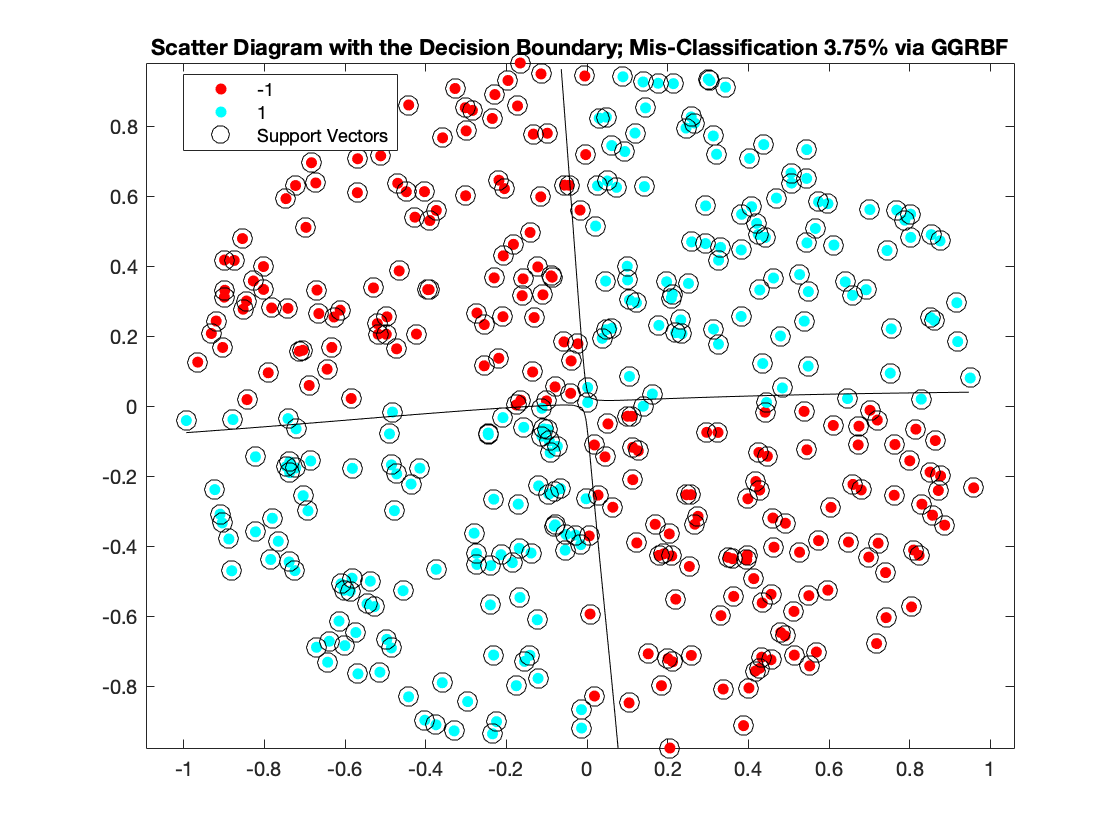}
  \label{fig:svm-3}\includegraphics[width=.45\linewidth,scale=1]{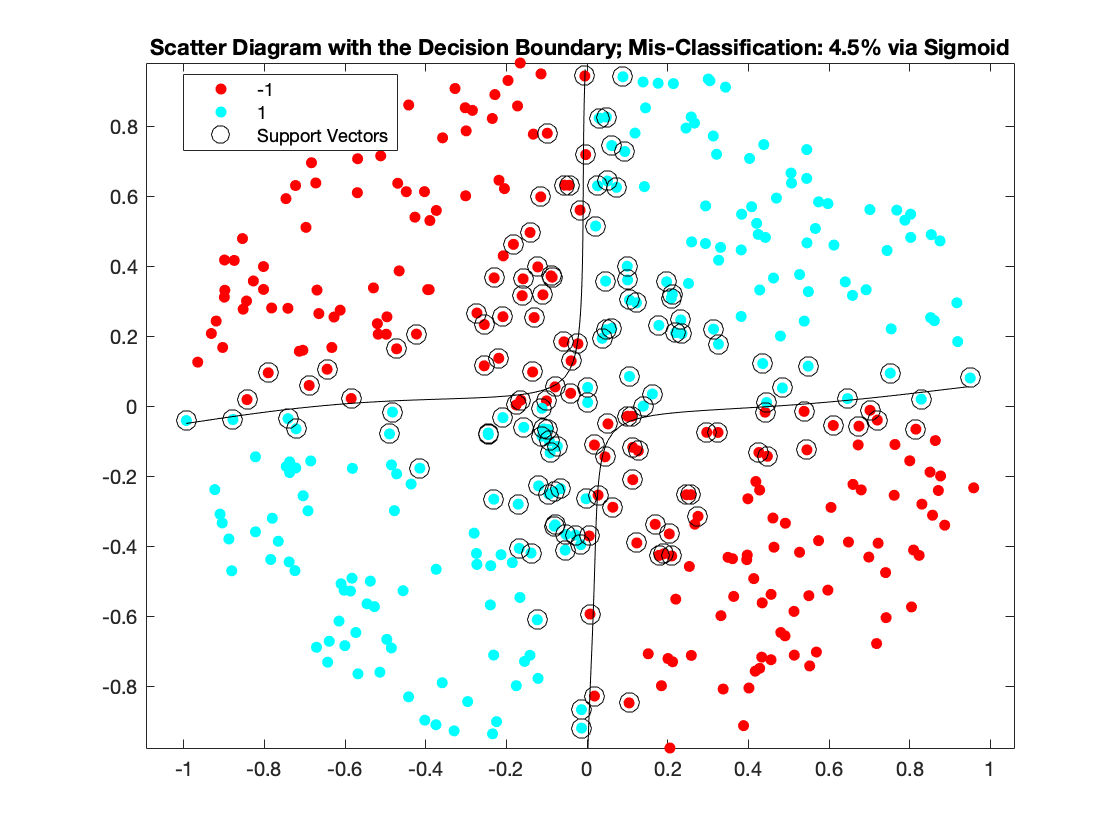} 
  \label{fig:svm-4}\includegraphics[width=.45\linewidth,scale=1]{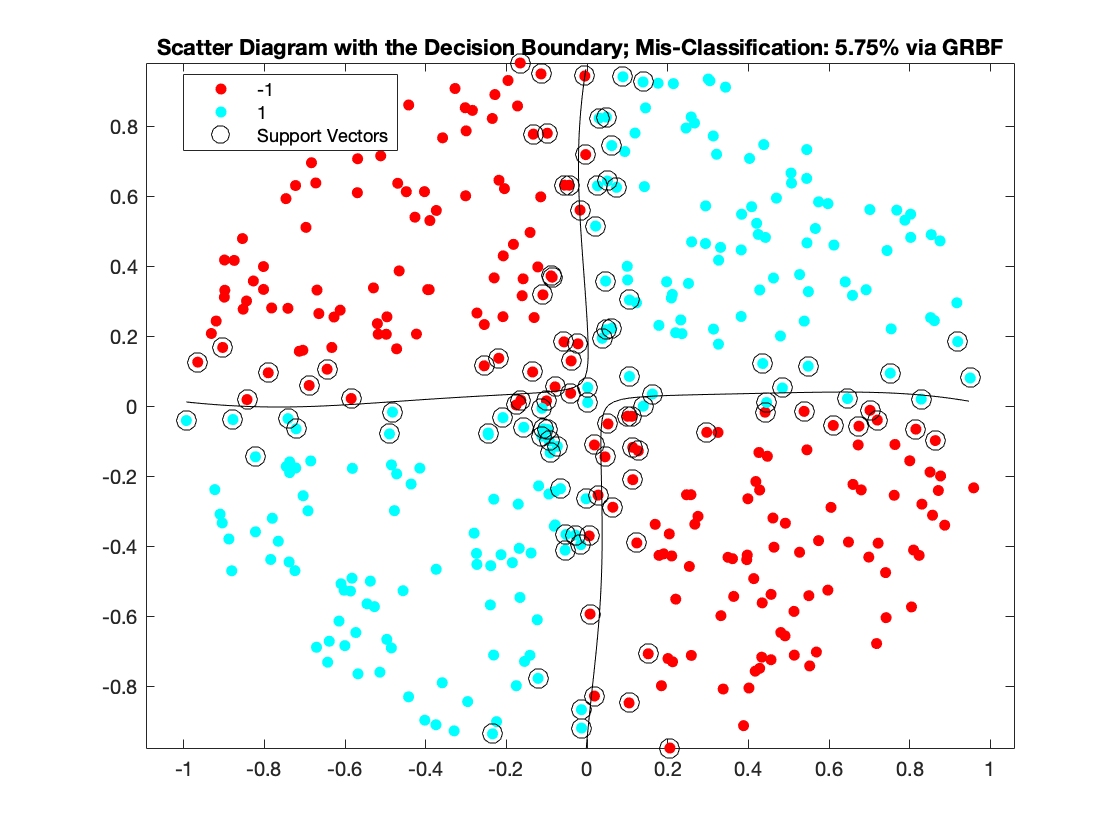}
  \caption{{[\tt{MATLAB}]} SVM classifier via different kernels.}\label{fig:SVMResultsGGRBF}
\end{figure}
Out of three kernels, \eqref{eq_GGRBF} yields the lowest miss-classification for the data of $100$ sampled points.
\subsection{Neural Network}
Following examples provide optimally-best results by the \eqref{eq_GGRBF} employment both as in the usage of \emph{Activation Function} and neural net layer in \emph{deep convolutional neural nets} (DCNN).
\subsubsection{Activation Function}
Consider an $\alpha$ReLU activation function, a simple modification of powerful-traditional ReLU activation function from NN, defined as
\begin{align}
    f(x)\coloneqq\begin{cases} x&\text{if $x>0$}\\
    \alpha x&\text{if $x\leq0$}.\tag{%Modified
    \text{$\alpha$}ReLU}\label{Modified ReLU}
    \end{cases}
\end{align}
In the present experiment, a two 7-layered NNs are constructed; one with the activation function defined by \eqref{Modified ReLU} and one with \eqref{eq_GGRBF}. Thereafter, following performance tables yields the respective results and comparison for the NNs.
\begin{figure}[H]
  \centering
  \label{fig:nn_GGRBF}\includegraphics[width=.45\linewidth,scale=1]{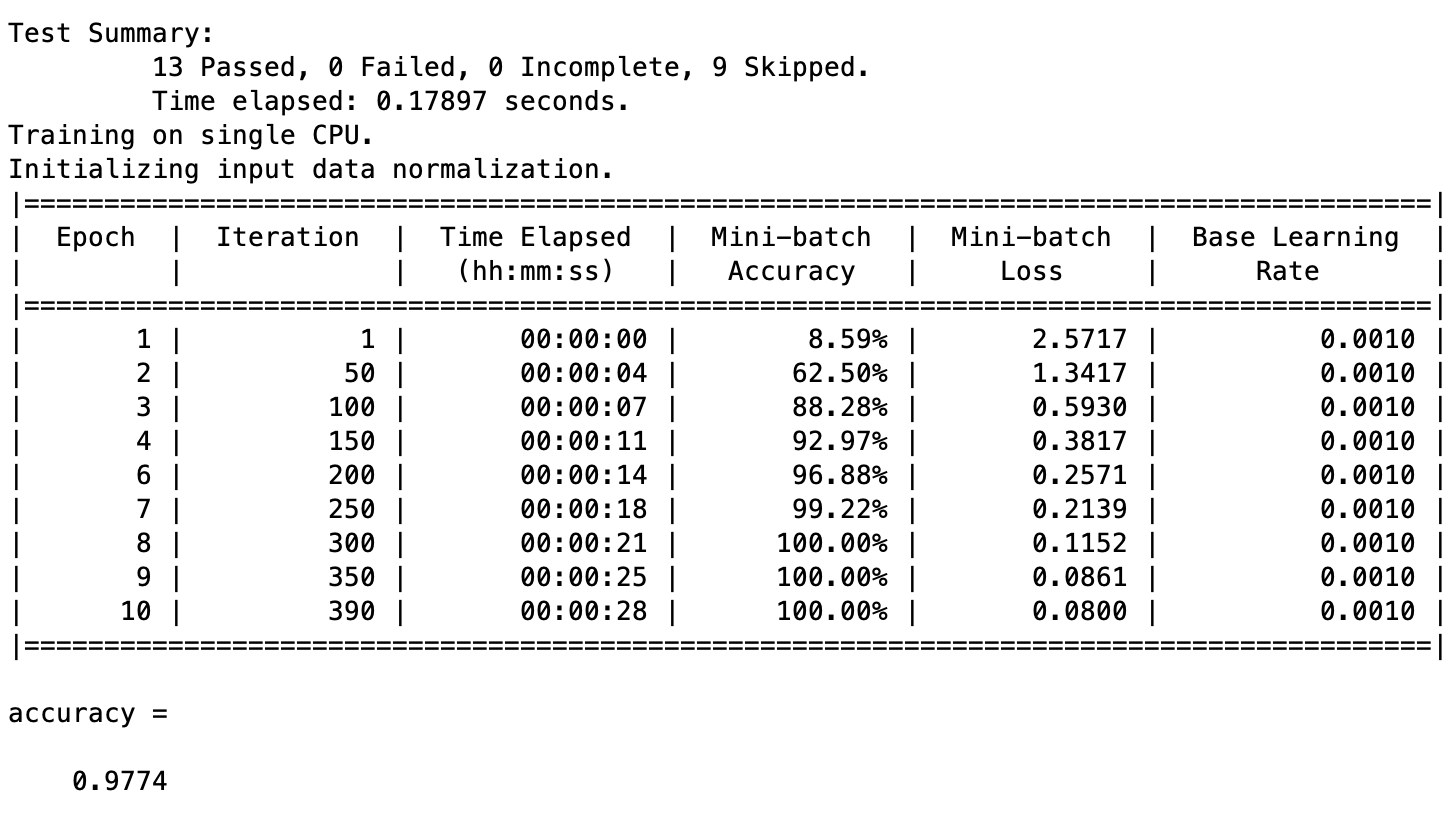}\hfill
  \includegraphics[width=0.45\linewidth,scale=1]{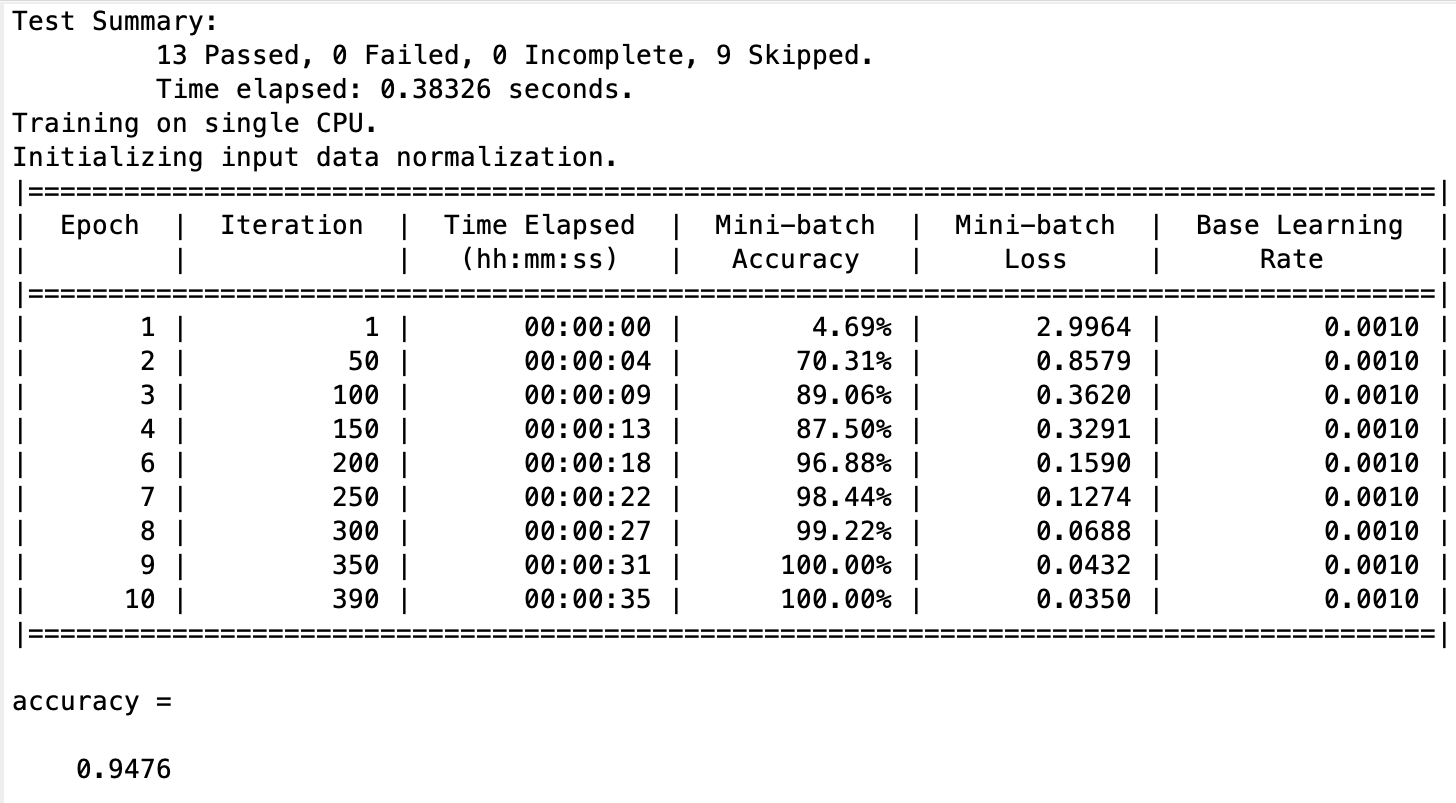}
  \caption{[{\tt{MATLAB}]} Performance comparison \eqref{eq_GGRBF} \emph{(L)} \& \eqref{Modified ReLU} \emph{(R)} as the NN activation function.}
  \label{fig:nn_noGGRBF}
\end{figure}
{\tt{MATLAB}} script for the activation function of GGRBF is provided in \autoref{section_matlab_code}.
\subsubsection{DCNN}%As of now,
Above experiments demonstrate the concrete functionality of \eqref{eq_GGRBF} as an activation function, it further motivates to perform DCNN%\emph{deep convolutional neural nets} (DCNN)
. Therefore, in pursue of this, a typical DCNN of $7$-layered is constructed; one with the activation function defined by \eqref{Modified ReLU} and one with \eqref{eq_GGRBF}. In the following experiment, the training data contains $1500$-$28\times28$ gray-scale letter images of $A$, $B$, and $C$ in a $4$-D array.
\begin{figure}[H]
  \centering
  \label{fig:dcnn_GGRBF}\includegraphics[width=.345\linewidth,scale=1]{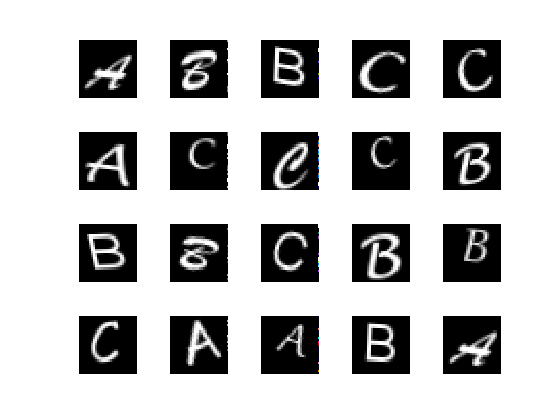}\hfill
  \includegraphics[width=0.345\linewidth,scale=1]{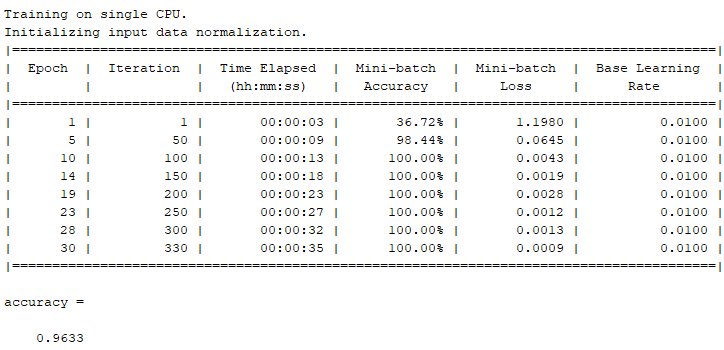}
  \caption{{[\tt{MATLAB}]} $96.33\%$ accuracy registered with \eqref{eq_GGRBF} as DCNN neural layer.}
  \label{fig:dcnn_GGRBF_performance}
\end{figure}
\begin{figure}[H]
  \centering
  \label{fig:dcnn_noGGRBF}\includegraphics[width=.345\linewidth,scale=1]{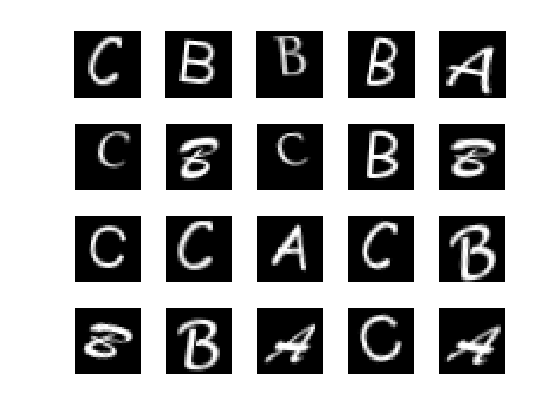}\hfill
  \includegraphics[width=0.345\linewidth,scale=1]{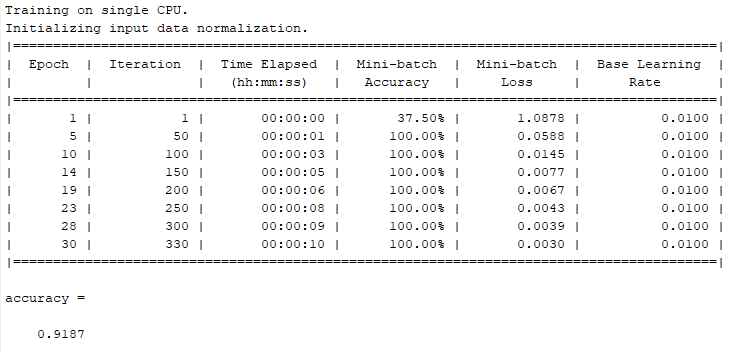}
  \caption{{[\tt{MATLAB}]} $91.87\%$ accuracy registered with \eqref{Modified ReLU} as DCNN neural layer.}
  \label{fig:dcnn_noGGRBF_performance}
\end{figure}
\section{Future Directions}
\subsection{Eigen-function expansion of GGRBF}
We recall the \emph{Mercer's Theorem} from \protect{\cite[Theorem 4.2, Page 96]{williams2006gaussian}}.
\begin{theorem}[Mercer's Theorem]\label{theorem_Mercer}
    Let $\left(\mathcal{X},\mu\right)$ be a finite measure space and $k\in L_\infty\left(\mathcal{X}^2,\mu^2\right)$ be a kernel such that $T_k:L_2\left(\mathcal{X},\mu\right)\to L_2\left(\mathcal{X},\mu\right)$ is positive definite. Let $\left\{\phi_i\right\}_i\in L_2\left(\mathcal{X},\mu\right)$ be the normalized eigenfunctions of $T_k$ associated with the eigenvalues $\left\{\Lambda_i\right\}_i$. Then:
    \begin{enumerate}
        \item the eigenvalues $\left\{\Lambda\right\}_i$ are absolutely summable
        \item \begin{align}
            k\left(\mathbf{x},\mathbf{x'}\right)=\sum_{i=0}^\infty\Lambda_i\phi_i\left(\mathbf{x}\right)\phi_i\left(\mathbf{x'}\right)^*
        \end{align}
        holds $\mu^2$ almost everywhere, where the series converges absolutely and
uniformly $\mu^2$ almost everywhere.
    \end{enumerate}
\end{theorem}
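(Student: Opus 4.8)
The plan is to identify the pair $\left\{\Lambda_i,\phi_i\right\}_i$ as the spectral data of the integral operator attached to $k$, and then upgrade the abstract Hilbert-space eigenexpansion into the pointwise and uniform series asserted in \autoref{theorem_Mercer}. First I would introduce the operator $T_k$ explicitly by
\begin{align*}
    \left(T_kf\right)\left(\mathbf{x}\right)=\int_{\mathcal{X}}k\left(\mathbf{x},\mathbf{x'}\right)f\left(\mathbf{x'}\right)d\mu\left(\mathbf{x'}\right),\qquad f\in L_2\left(\mathcal{X},\mu\right).
\end{align*}
Because $\mu$ is finite and $k\in L_\infty\left(\mathcal{X}^2,\mu^2\right)$, the kernel automatically lies in $L_2\left(\mathcal{X}^2,\mu^2\right)$, so $T_k$ is Hilbert--Schmidt and hence compact; the positive-definiteness hypothesis makes it self-adjoint and positive. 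The spectral theorem for compact self-adjoint operators then furnishes a countable orthonormal system $\left\{\phi_i\right\}_i$ of eigenfunctions with real eigenvalues $\Lambda_i\to0$, and positivity forces $\Lambda_i\geq0$. This is precisely the eigendata named in the statement.

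For part $(1)$, the strategy is to show $T_k$ is trace class. I would form the truncated kernels $k_N\left(\mathbf{x},\mathbf{x'}\right)\coloneqq\sum_{i=1}^N\Lambda_i\phi_i\left(\mathbf{x}\right)\phi_i\left(\mathbf{x'}\right)^*$ and note that each remainder $k-k_N$ is itself the kernel of a positive operator, since its spectrum is the nonnegative tail $\left\{\Lambda_i\right\}_{i>N}$. Evaluating positivity on the diagonal gives
\begin{align*}
    \sum_{i=1}^N\Lambda_i\left|\phi_i\left(\mathbf{x}\right)\right|^2\leq k\left(\mathbf{x},\mathbf{x}\right)\qquad\text{for $\mu$-a.e. }\mathbf{x},
\end{align*}
and because $\Lambda_i\geq0$ these partial sums increase monotonically. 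Integrating against the finite measure $\mu$ and using $\left\|\phi_i\right\|_{L_2}=1$ yields $\sum_{i=1}^N\Lambda_i\leq\int_{\mathcal{X}}k\left(\mathbf{x},\mathbf{x}\right)d\mu\left(\mathbf{x}\right)<\infty$, which is the claimed absolute summability.

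For part $(2)$, the decisive estimate is the Cauchy--Schwarz bound valid for every positive kernel, applied to $k-k_N$:
\begin{align*}
    \left|k\left(\mathbf{x},\mathbf{x'}\right)-k_N\left(\mathbf{x},\mathbf{x'}\right)\right|^2\leq\left(k\left(\mathbf{x},\mathbf{x}\right)-k_N\left(\mathbf{x},\mathbf{x}\right)\right)\left(k\left(\mathbf{x'},\mathbf{x'}\right)-k_N\left(\mathbf{x'},\mathbf{x'}\right)\right).
\end{align*}
Thus the off-diagonal convergence is governed entirely by the diagonal remainder $r_N\left(\mathbf{x}\right)\coloneqq k\left(\mathbf{x},\mathbf{x}\right)-\sum_{i=1}^N\Lambda_i\left|\phi_i\left(\mathbf{x}\right)\right|^2$, which decreases monotonically to $0$ by part $(1)$. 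Dini's theorem promotes this monotone pointwise convergence to uniform convergence, and the displayed bound then transfers that uniformity to the full kernel, producing the absolutely and uniformly convergent expansion $k\left(\mathbf{x},\mathbf{x'}\right)=\sum_i\Lambda_i\phi_i\left(\mathbf{x}\right)\phi_i\left(\mathbf{x'}\right)^*$.

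The hard part will be making the uniform-convergence step rigorous in the purely measure-theoretic setting, since Dini's theorem genuinely requires continuity and compactness rather than mere $L_2$ structure. Concretely, the eigenfunctions $\phi_i$ are a priori only $L_2$ equivalence classes, so one must select honest continuous representatives via the smoothing identity $\phi_i=\Lambda_i^{-1}T_k\phi_i$ on the positive part of the spectrum, and correspondingly interpret the diagonal inequalities and the final expansion $\mu^2$-almost everywhere. Bridging this gap between the abstract spectral picture and the pointwise statement is the delicate technical core of the argument; everything else is bookkeeping around compactness, positivity, and monotone convergence.
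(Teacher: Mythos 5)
You should first know that the paper contains no proof of this statement to compare against: Mercer's theorem is quoted there verbatim as background from \cite[Theorem 4.2, Page 96]{williams2006gaussian}, so your attempt stands or falls on its own merits.

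Your skeleton is the classical Mercer argument --- $k\in L_\infty(\mathcal{X}^2,\mu^2)$ plus finiteness of $\mu$ gives $k\in L_2(\mathcal{X}^2,\mu^2)$, hence $T_k$ is Hilbert--Schmidt and compact; the spectral theorem plus positivity gives $\Lambda_i\geq0$; positivity of the remainders $k-k_N$ gives the diagonal bound; Cauchy--Schwarz plus Dini gives uniform convergence --- and that argument is a correct and complete proof when $\mathcal{X}$ is a compact topological space and $k$ is continuous. In the setting actually stated, however, $(\mathcal{X},\mu)$ is a bare finite measure space carrying no topology, and two of your steps genuinely fail there rather than merely needing polish. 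First, the diagonal $\left\{(\mathbf{x},\mathbf{x})\right\}$ is typically a $\mu^2$-null set, so the function $\mathbf{x}\mapsto k(\mathbf{x},\mathbf{x})$ is simply not determined by the hypothesis $k\in L_\infty(\mathcal{X}^2,\mu^2)$: both your trace bound $\sum_i\Lambda_i\leq\int_{\mathcal{X}}k(\mathbf{x},\mathbf{x})\,d\mu(\mathbf{x})$ and your remainder $r_N(\mathbf{x})=k(\mathbf{x},\mathbf{x})-\sum_{i\leq N}\Lambda_i|\phi_i(\mathbf{x})|^2$ are expressions in a quantity that has no meaning until one constructs a distinguished representative of $k$ whose diagonal values are canonically defined (and doing so by declaring the diagonal to be $\sum_i\Lambda_i|\phi_i(\mathbf{x})|^2$ would be circular). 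Second, Dini's theorem requires continuity of the partial sums and of the limit on a compact space; with no topology on $\mathcal{X}$, your proposed repair of selecting ``continuous representatives'' via $\phi_i=\Lambda_i^{-1}T_k\phi_i$ is vacuous, since $T_k$ maps into $L_\infty(\mu)$ and there is no space of continuous functions to land in. You honestly flag exactly this as the ``delicate technical core,'' but the flagged core \emph{is} the theorem: the measure-theoretic statement asserts uniform convergence off a $\mu^2$-null set, and producing that null set requires machinery adapted to the purely measurable setting (a.e.\ positive definiteness of a good representative of $k$, combined with Egorov/monotone-convergence-type arguments), not Dini. As written, your proposal proves the classical topological Mercer theorem and leaves the stated measure-theoretic one open.
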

\begin{example}
    With the application of \autoref{theorem_Mercer}, we can provide the eigen-function decomposition of $K_\sigma\left(x,z\right)=e^{-\sigma^2\left(x-z\right)^2}$ for $x,z\in\mathbf{R}$, that is: %$e^{-\sigma^2\left(x-z\right)^2}=\sum_{i=0}^\infty\Lambda_i\phi_i\left(x\right)\phi_i\left(z\right)$ as:
    \begin{align}
        e^{-\sigma^2\left(x-z\right)^2}=&\sum_{i=0}^\infty\Lambda_i\phi_i\left(x\right)\phi_i\left(z\right),\quad\text{where}\\
        \Lambda_i=&\frac{\alpha\sigma^{2i}}{\left(\frac{\alpha^2}{2}\left(1+\sqrt{1+\left(\frac{2\sigma}{\alpha}\right)^2}\right)+\frac{\sigma^2}{2}\right)^{i+\nicefrac{1}{2}}}\\
        \phi_i\left(x\right)=&\frac{\sqrt[8]{1+\left(1+\frac{2\sigma}{\alpha}\right)^2}}{\sqrt{2^ii!}}e^{-\left(\sqrt{1+\left(\frac{2\sigma}{\alpha}\right)^2}-1\right)\frac{\alpha^2x^2}{2}}%\cdot\ldots\nonumber\\&\quad
        H_i\left(\sqrt[4]{1+\left(\frac{2\sigma}{\alpha}\right)^2}\alpha x\right)\label{eq_20Hi}.
    \end{align}
    The expression $\left\{H_i\left(\bullet\right)\right\}_i$ in \eqref{eq_20Hi} are the \emph{Hermite polynomials} which are $L_2-$orthonormal against the weight $\nicefrac{\alpha}{\pi}e^{-\alpha^2x^2}$; that is:
    \begin{align}
        \int_{\mathbf{R}}\phi_n(x)\phi_m(x)\frac{\alpha}{\pi}e^{-\alpha^2x^2}dx=\delta_{nm}.
    \end{align}
\end{example}
We have the graphical representation of first seven Hermite polynomials in following figures \autoref{fig:first seven Hermite polynomials}: %\autoref{fig:HermiteLikeFunctions2b=0_part1} and \autoref{fig:HermiteLikeFunctions2b=0_part2}.
\begin{figure}[H]
    \centering
    \includegraphics[scale=.9]{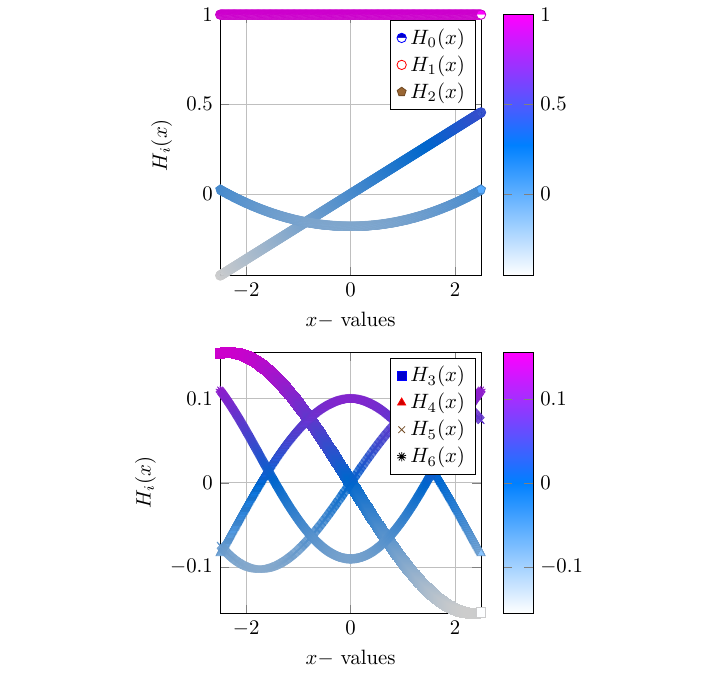}
    \caption{First seven Hermite polynomials}
    \label{fig:first seven Hermite polynomials}
\end{figure}
Based on the Hermite polynomials \protect{\cite{hermite1864nouveau}} and as its application for the eigen decomposition analysis for GRBF kernel, we have the following promising future research direction.

In the spirit of the application of the Mercer's Theorem, we know the eigen-function decomposition of the GRBF Kernel. However, presently, we are not fortunate to have such a decomposition for the GGRBF Kernel. A preliminary investigation towards the desired eigen-function decomposition of GGRBF Kernel (followed from \protect{\cite{rasmussen2006gaussian,zhu1997gaussian,fasshauer2012stable}}) directs us to incorporate a new variety of function defined in \eqref{eq_22GGRBF}.
\begin{align}
    \mathcal{H}_n(x)%=&(-)^ne^{ax^2}e^{-e^{-bx^2}+1}D^n\left(\left(e^{ax^2}e^{-e^{-bx^2}+1}\right)^{-1}\right)\\
    \coloneqq&(-1)^ne^{ax^2}e^{-e^{-bx^2}+1}\frac{d^n}{dx^n}\left(e^{-ax^2}e^{e^{-bx^2}-1}\right)\label{eq_22GGRBF}
\end{align}
into our desired eigen-function decomposition analysis. Here $a>0$ and $b\geq0$ and therefore, if $a=1~\&~b=0$ then $\mathcal{H}_i=H_i$. The first two expression for $\mathcal{H}_n(x)$ are explicitly given as:
\begin{align*}
    \mathcal{H}_1(x)=&2ax+2bxe^{-bx^2},\tag{D1}\label{eq_D1}\\
    \mathcal{H}_2(x)=&-2a+4b^2x^2e^{-bx^2}-2be^{-bx^2}+\left(2ax+2bxe^{-bx^2}\right)^2\tag{D2}\label{eq_D2}.
\end{align*}
The constants $a$ and $b$ present in \eqref{eq_22GGRBF} corresponds to the respective constants present in \eqref{eq_GGRBF}; in particular $a={\sigma}^2$ and $b={\sigma}_0^2$. Following are the graphical presentation of first seven function from the family defined in \eqref{eq_22GGRBF}.
\begin{figure}[H]
    \centering
    \includegraphics[scale=.55]{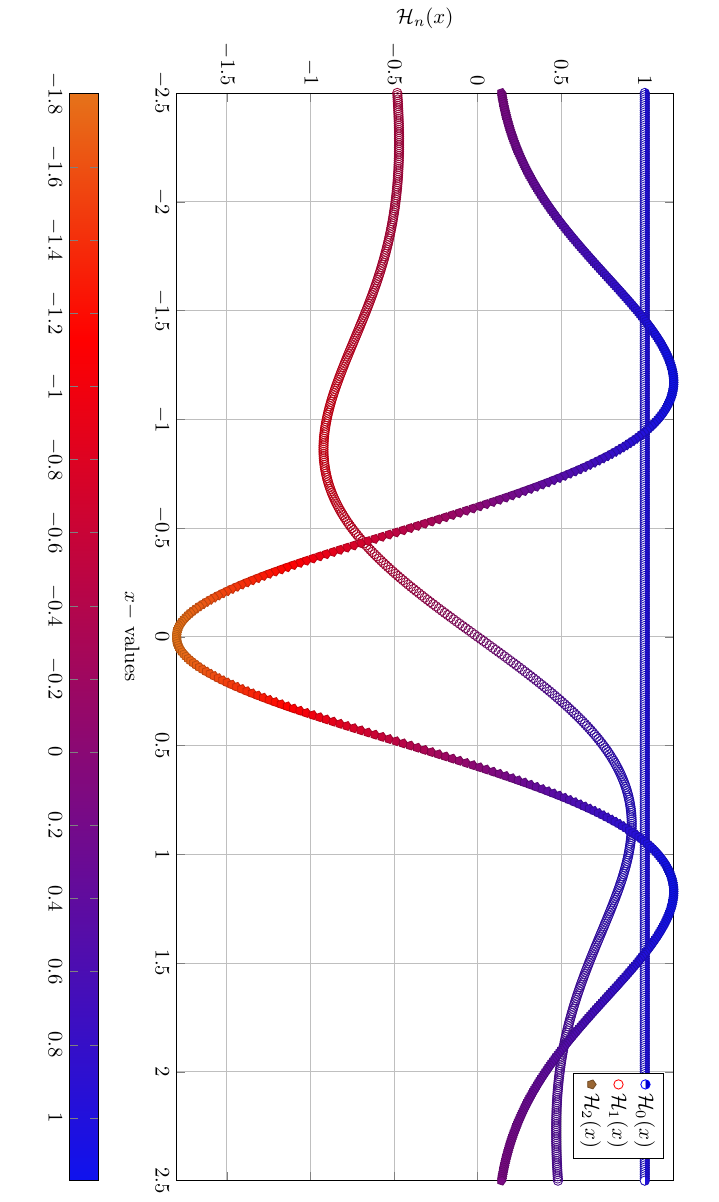}
    \includegraphics[scale=.55]{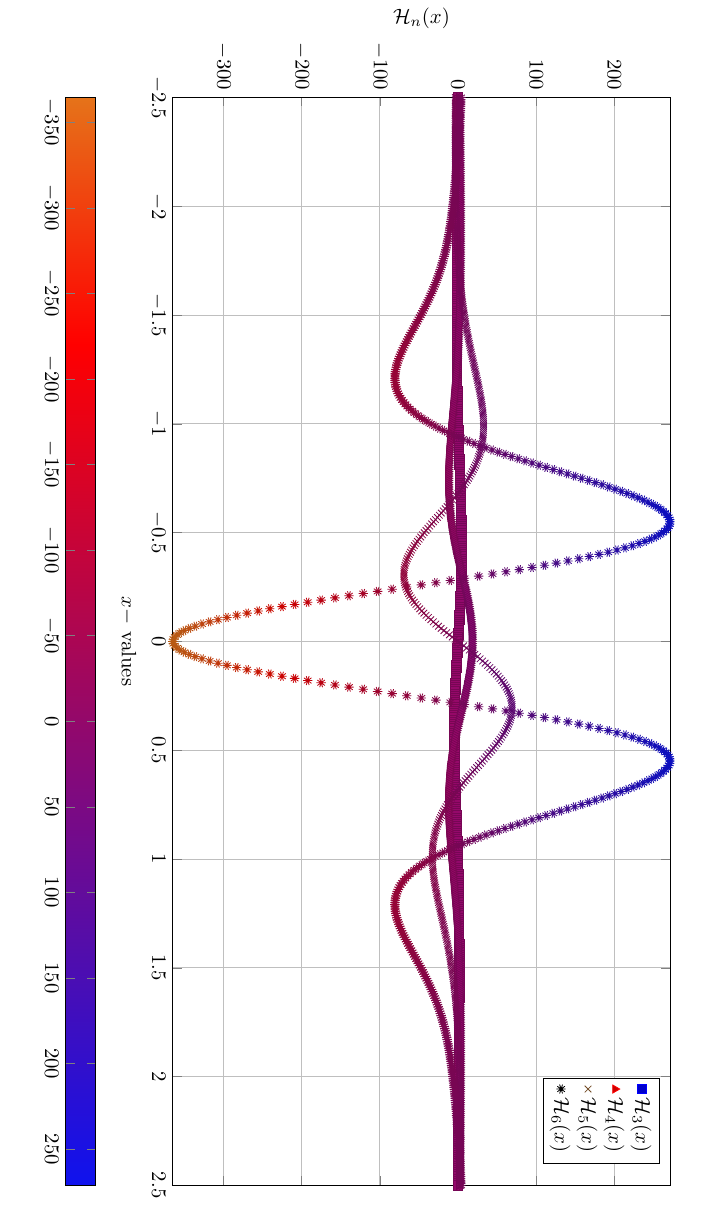}
    \caption{Graph of $\mathcal{H}_n(x)$ for $n=0,1,2,3,4,5$ and $6$ with $a=.091~\&~b=0.81$.}
    \label{fig:Graph_Hermite_like_1234567}
\end{figure}
%\begin{figure}[H]
%    \centering
%    \includegraphics[scale=.5]{GGRBF-Hermite-like-graph4567.pdf}
%    \caption{Graph of $\mathcal{H}_n(x)$ for $n=3,4,5,6$ with $a=.091~\&~b=0.81$.}
%    \label{fig:HermiteLikeFunctions1_part2}
%\end{figure}
Following \textsc{\autoref{my-label}} is the compilation of the results documented for the various experiments we discussed so far.
{{
\begin{table*}[ht]
\caption{\text{Compilation of results}}
\label{my-label}
\begin{tabularx}{\textwidth}{@{} l p{.89in} *{10}{C} c @{}}
\toprule
{\tiny{\textsc{AI Learning Architecture}}} & {\tiny{\textsc{Mathematical Function}}} & {\tiny{\textsc{Figure Ref.}}} 
& {\tiny{\textsc{Minimum Error}}}  %&Minimum Error recorded
& {\tiny{\textsc{Misclass. \%}}} & {\tiny{\textsc{Accuracy \%}}}
\\ 
\midrule
Kernel Regression & GRBF   &\autoref{fig:regression-2} &0.0023      %& -
& -   & - % & 3    & 1   & 0    & 2     & 27   & 12    & 91.50\%
\\ 
Kernel Regression&\textbf{GGRBF} &\autoref{fig:regression-2} &$\bm{9.6913\times10^{-4}}$  %& -
& -    & -\\%   & 0    & 0   & 3    & 0     & 10   & 38    & 93.40\%
Kernel Regression & GRBF   &\autoref{fig:regression-4} %&-      
& 0.0010  & -   & - % & 3    & 1   & 0    & 2     & 27   & 12    & 91.50\%
\\ 
Kernel Regression&\textbf{GGRBF} &\autoref{fig:regression-4} %&-  
& $\bm{4.2882\times10^{-4}}$ & -    & -%   & 0    & 0   & 3    & 0     & 10   & 38    & 93.40\%
\\ 
\midrule
Support Vector Machine&GRBF&\autoref{fig:SVMResultsGGRBF}&-%&-
&5.75&94.25\\
Support Vector Machine&Sigmoid&\autoref{fig:SVMResultsGGRBF}&-%&-
&4.5&95.5\\
Support Vector Machine&\textbf{GGRBF}&\autoref{fig:SVMResultsGGRBF}&-%&-
&$\bm{3.75}$&$\bm{96.25}$\\
\midrule
Activation Function Neural Network&$\alpha$ReLU&\autoref{fig:nn_noGGRBF}&-%&-
&5.24&{94.76}\\
Activation Function Neural Network&\textbf{GGRBF}&\autoref{fig:nn_noGGRBF}&-%&-
&$\bm{2.26}$&$\bm{97.74}$\\
\midrule
Deep Convolutional Neural Network&\textbf{GGRBF}&\autoref{fig:dcnn_GGRBF_performance}&-%&-
&$\bm{3.67}$&$\bm{96.33}$\\
Deep Convolutional Neural Network&$\alpha$ReLU &\autoref{fig:dcnn_noGGRBF_performance}&-%&-
&8.13&{91.87}\\
\bottomrule
\end{tabularx}
\end{table*}}}
\begin{futuredirection}
{Having stated that, we are still in the void knowledge for eigen-values of the GGRBF Kernel. Additionally, we still need to investigate whether the function introduced in \eqref{eq_22GGRBF} are orthonormal in the sense as the traditional Hermite polynomials are. Therefore, it will be interesting to understand the analysis of the function given in \eqref{eq_22GGRBF}.}
\end{futuredirection}
\subsection{Operator Theory Analysis for Data-Driven Problems}
Modern data driven problems arising in the filed of dynamical systems are captured by the Liouville Operators \protect{\cite{rosenfeld2022dynamic}}, Liouville Weighted Composition Operators \protect{\cite[Chapter 2]{singh2023applied}} or Koopman Operators \protect{\cite{williams2015data}} acting over the underlying Hilbert spaces. The work-horse algorithm in the direction of \emph{reduced order modelling} (ROM) techniques is \emph{Dynamic Mode Decomposition} (DMD) \protect{\cite{schmid2010dynamic}} which aims to determine the spatio-temporal coherent structures of high-dimensional time-series data is executed by extracting the eigen-observables of the aforementioned operators over the underlying RKHS.

These RKHS are \emph{Bergman-Seigal-Fock Space} \protect{\cite{zhu2012analysis}} which is generated by the \emph{exponential dot product kernel}. The $L^2-$ measure for this RKHS is (normalized) Gaussian measure $\left(\nicefrac{\sigma^2}{\pi}\right)^de^{-\sigma^2|\bm{z}|^2}dV_{\mathbf{C}^d}(\bm{z})$. Following are the results from the DMD experiment for the vorticity of the fluid flow across the cylinder when the chosen kernel was the GRBF Kernel.
\begin{figure}[H]
    \centering
    \frame{
    \includegraphics[%width=30mm,
    scale=0.34]{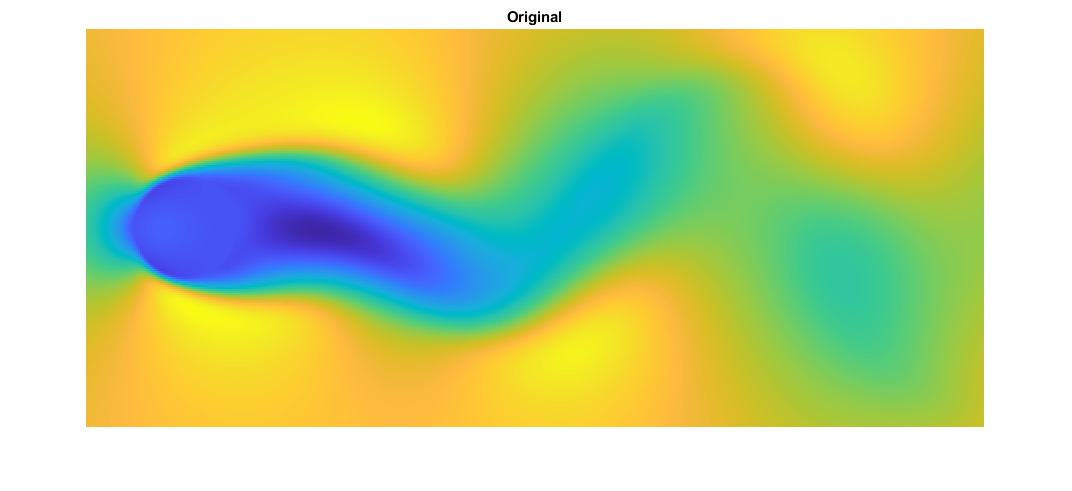}}%\hfill\includegraphics[width=30mm,
    %scale=0.4]{Original.png}
    \caption{Original DMD Experiment for the fluid flow across the cylinder. %{\tiny{\textsuperscript{\textcopyright}Himanshu Singh's Dissertation}}
    }
    \label{fig:original}
\end{figure}
\begin{figure}[H]
    \centering
    %\includegraphics[%width=35mm,
    %scale=.195]{Liouville Modes.png}%\hfill
    \frame{\includegraphics[%width=35mm,
    scale=.34]{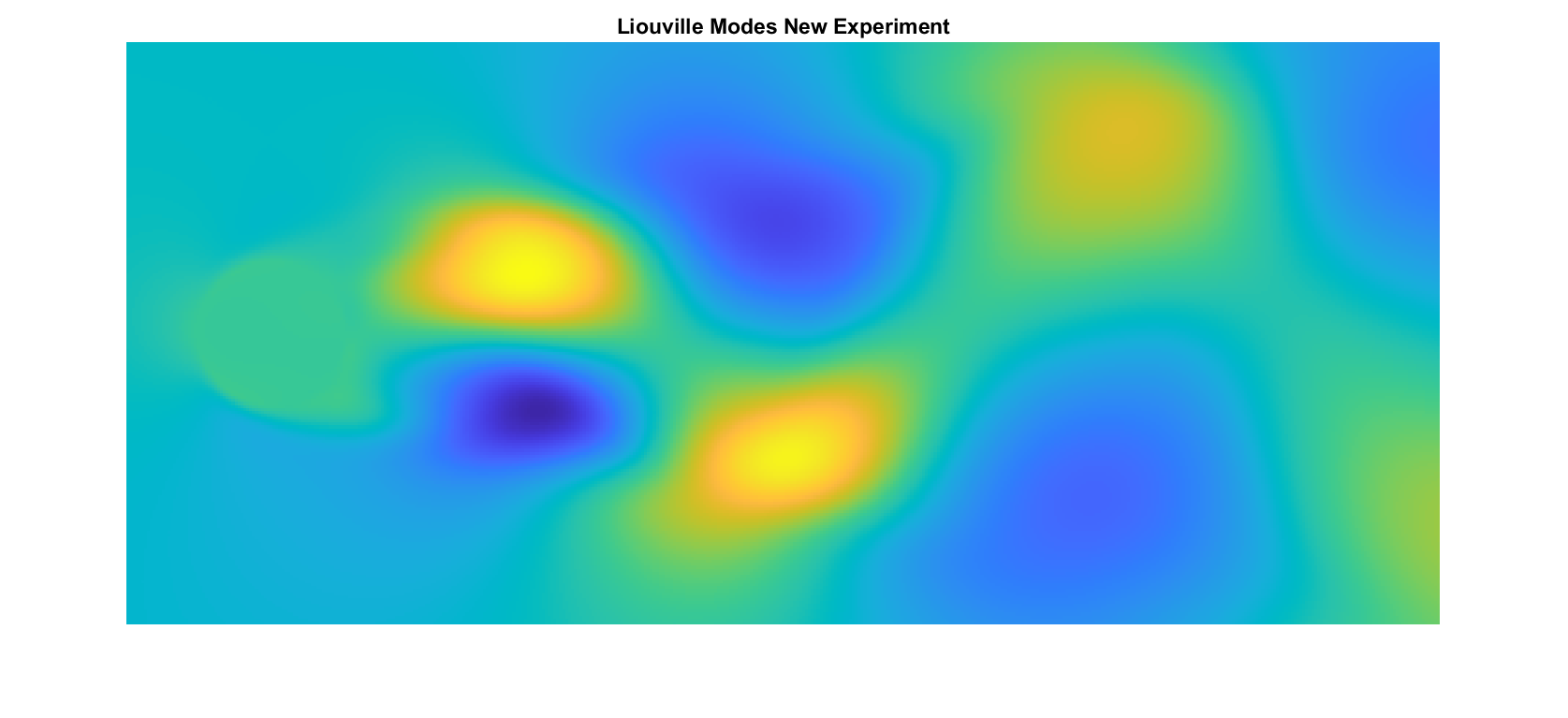}}
    \caption{199\textsuperscript{th}-Liouville Mode (no noise) via the GGRBF Kernel for the fluid flow across the cylinder. %{\tiny{\textsuperscript{\textcopyright}Himanshu Singh's Dissertation}}
    }
    \label{fig:my_label_Liouville Modes_GGRBF}
\end{figure}
\begin{figure}[H]
    \centering
    \frame{\includegraphics[%width=35mm,
    scale=.34]{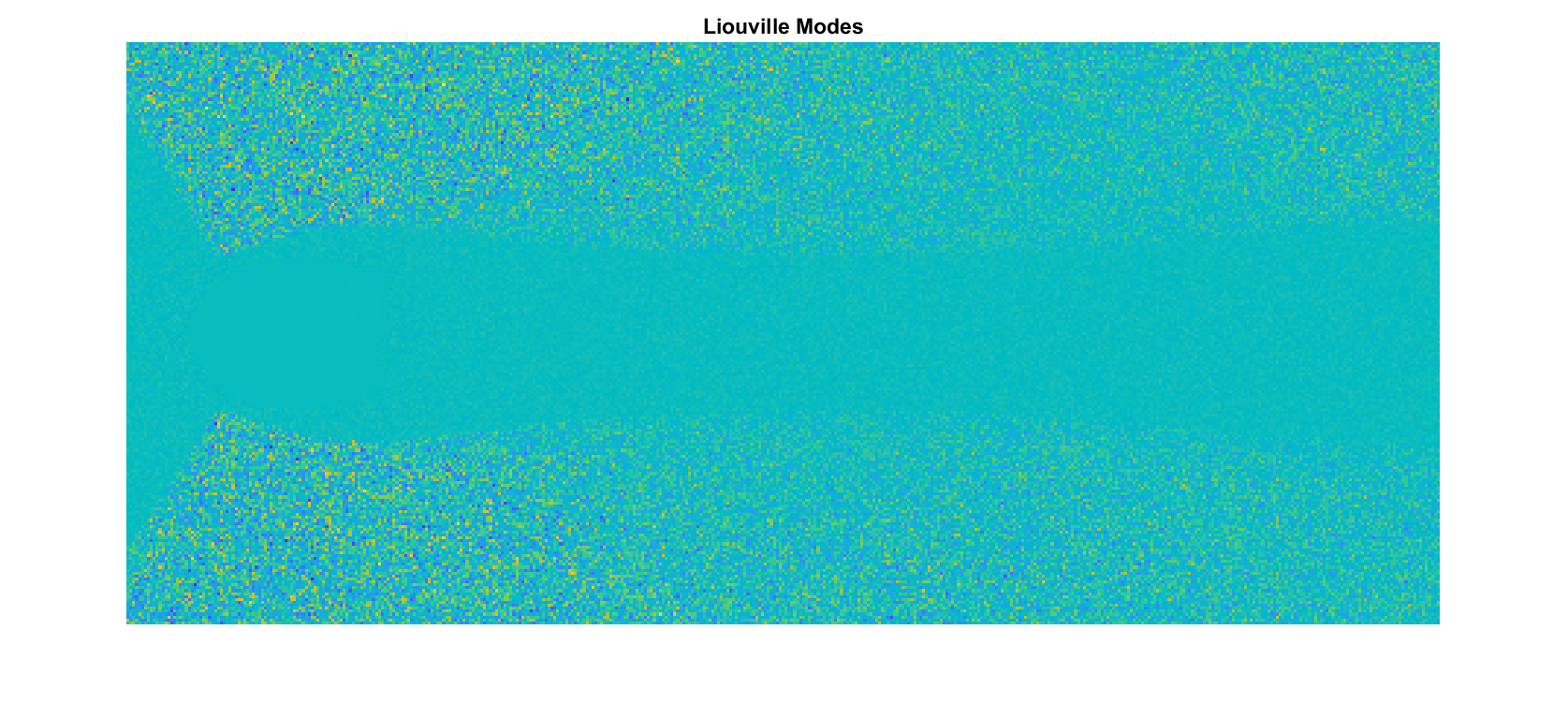}}%\hfill
    %\includegraphics[%width=35mm,
    %scale=.15]{LiouvilleModesNewExperiment500,20}
    \caption{199\textsuperscript{th}-Liouville Mode (noise) via the GRBF Kernel for the fluid flow across the cylinder. %{\tiny{\textsuperscript{\textcopyright}Himanshu Singh's Dissertation}}
    }
    \label{fig:my_label_Liouville Modes_GRBF}
\end{figure}
Clearly, DMD results by GGRBF in \autoref{fig:my_label_Liouville Modes_GGRBF} contains visibly-no-noise as compared to results obtain by GRBF in \autoref{fig:my_label_Liouville Modes_GRBF}.
\begin{futuredirection}
We understand the action of Koopman Operators (or composition operators) over the Bergman-Seigal-Fock Space due to the investigation performed by Carswell, MacCluer and Schuster in \protect{\cite{carswell2003composition}}. On the other hand, \protect{\cite[Theorem 2.25, Chapter 2]{singh2023applied}} demonstrates the provable convergence phenomena for dynamical systems by the Liouville weighted composition operators over the Bergman-Seigal-Fock Space. It will be interesting to carry-out the similar operator theoretic investigation over the RKHS introduced in \eqref{eq_8hilbertspace}.    \end{futuredirection}
\section{Acknowledgement}
The author would like to thanks reviewer for their precious time on reviewing the present manuscript. In addition to this, author also acknowledges the support of following as well.
\begin{enumerate}
\item The author acknowledges the support of \textit{Ms. Drishty Singh}, \emph{4th Year MSc, Department of Mathematics, Babasaheb Bhimrao Ambedkar University, Lucknow, Uttar Pradesh, India}. She provided the explicit expression of first two functions in \eqref{eq_D1} and \eqref{eq_D2} from \eqref{eq_22GGRBF}. Further analysis of the function in \eqref{eq_22GGRBF} was extended due to these important results.
\item The author is thankful for the valuable discussion with \text{Dr. Romit Maulik}, \emph{Assistant Professor, Information Sciences and Technology, Institute of Computational and Data Sciences, Pennsylvania State University}. The results in \autoref{section_comparisonresults} were discovered during the research discussion with him in Fall 2022's ending for preparing the research proposal for \emph{Eric and Wendy Schmidt AI in Science Postdoctoral Fellow Program-2023}.
\item The empirical results in \autoref{section_comparisonresults} were discovered during the last year of PhD at University of South Florida, where he was supported by \textsc{Dr. Joel A. Rosenfeld}'s (PhD advisor) NSF and AFOSR grants (AFOSR Young Investigator Research Program (YIP) Award FA9550-21-1-0134, FA9550-20-1-0127 \& ECCS-2027976). The author would like to thanks his advisor's support on this. The RKHS theory present in the first half of this paper was developed while he was at The University of Texas at Tyler.
\end{enumerate}
 %The author also acknowledges the emotional support from his partner \textsc{Payal Ghosh}, \emph{Senior Research Specialist, Sanford Research, South Dakota, USA} while during his struggles in early-career days.
%The author is thankful for the valuable discussion with \textsc{Dr. Romit Maulik}, \emph{Assistant Professor, Information Sciences and Technology, Institute of Computational and Data Sciences, Pennsylvania State University}. The results in \autoref{section_comparisonresults} were discovered during the research discussion with him in Fall 2022's ending for preparing the research proposal for \emph{Eric and Wendy Schmidt AI in Science Postdoctoral Fellow Program-2023}.
\section{{\tt{MATLAB}} Script}\label{section_matlab_code}
\subsection{Neural Net Layer of GGRBF}In order to execute experiments which employs the neural net layer of activation function as GGRBF, we have to construct it from the scratch. In that regards, following is the {\tt{MATLAB}} script for the construction of custom neural net layer for the GGRBF.
\lstset{language=Matlab,%
    frame=single,
    basicstyle=\color{red},
    breaklines=true,%
    morekeywords={matlab2tikz},
    keywordstyle=\color{blue},%
    morekeywords=[2]{1}, keywordstyle=[2]{\color{black}},
    identifierstyle=\color{black},%
    stringstyle=\color{mylilas},
    commentstyle=\color{mygreen},%
    showstringspaces=false,%without this there will be a symbol in the places where there is a space
    numbers=left,%
    numberstyle={\tiny \color{black}},% size of the numbers
    numbersep=9pt, % this defines how far the numbers are from the text
    %emph=[1]{for,end,break},emphstyle=[1]\color{red}, %some words to emphasise
    %emph=[2]{word1,word2}, emphstyle=[2]{style},    
}
\begin{lstlisting}[style=Matlab-editor]
classdef ggrbf < nnet.layer.Layer
    % Custom GGRBF layer.
   
    properties (Learnable)
        % Layer learnable parameters.
        % Scaling coefficients.
        Alpha
        Beta
    end
   
    methods
        function layer = ggrbf(numChannels, name)
            
           
            % Set layer name.
            layer.Name = name;
           
            
            % Initialize scaling coefficient.
            layer.Alpha = rand([1 1 numChannels]);
            layer.Beta = rand([1 1 numChannels]);
        end
       
        function Z = predict(layer, X)
        % Z = predict(layer, X) forwards the input data X through the layer and outputs the result Z.
            Z=exp(-(layer.Alpha).^(-2).*X.*X).*exp(exp(-(layer.Beta).^(-2).*X.*X)-1);
        end
    end
end
\end{lstlisting}
\newcommand{\etalchar}[1]{$^{#1}$}

\end{document}